\newtheorem*{example}{Running example}
\theoremstyle{definition}
\newtheorem{remark}{Remark}
\newtheorem{proposition}{Proposition}
\newcommand{\jcnote}[1]{{\textcolor{black}{#1}}}
\newcommand{\sagent}{s^{(i)}}
\newcommand{\sagentj}{s^{(j)}}
\newcommand{\srelative}{s^{(ij)}}
\newcommand{\oagent}{o^{(i)}}
\newcommand{\aagent}{a^{(i)}}
\newcommand{\aagentj}{a^{(j)}}
\newcommand{\aseq}{\boldsymbol{a}^{(i)}}
\newcommand{\aseqj}{\boldsymbol{a}^{(j)}}
\newcommand{\asafe}{a_{\textrm{safe}}^{(i)}}
\newcommand{\amarl}{a_{\textrm{marl}}^{(i)}}
\newcommand{\asafej}{a_{\textrm{safe}}^{(j)}}
\newcommand{\amarlj}{a_{\textrm{marl}}^{(j)}}
\newcommand{\aset}{\mathcal{A}}
\newcommand{\brt}{\mathcal{BRT}}
\newcommand{\safesetpair}{\mathcal{S}^{(ij)}}
\newcommand{\unsafeset}{\mathcal{L}^{(ij)}}
\newcommand{\timestep}{k}
\newcommand{\samplingtime}{\Delta t}
\newcommand{\distance}{\mathrm{dist}}
\newcommand{\cost}{\mathcal{C}}
\newcommand{\dsafety}{r_{\textrm{safety}}}
\newcommand{\dengagement}{r_{\textrm{conflict}}}
\newcommand{\dobservation}{r_{\textrm{obs}}}
\newcommand{\algoname}{Layered Safe MARL}
\begin{document}

\title{Resolving Conflicting Constraints in Multi-Agent Reinforcement Learning with Layered Safety}

\author{
\IEEEauthorblockN{
Jason J. Choi\IEEEauthorrefmark{1}$^{1}$,
Jasmine Jerry Aloor\IEEEauthorrefmark{1}$^{2}$,
Jingqi Li\IEEEauthorrefmark{1}$^{1}$,
Maria G. Mendoza$^{1}$, \\
Hamsa Balakrishnan\IEEEauthorrefmark{2}$^{2}$,
Claire J. Tomlin\IEEEauthorrefmark{2}$^{1}$
}\vspace{0.3cm}
\IEEEauthorblockA{$^{1}$University of California, Berkeley \quad Email: 
\{jason.choi, jingqili, maria$\_$mendoza, tomlin\}@berkeley.edu}
\IEEEauthorblockA{$^{2}$Massachusetts Institute of Technology \quad Email: 
\{jjaloor, hamsa\}@mit.edu}
\vspace{0.5em}
\IEEEauthorblockA{\IEEEauthorrefmark{1}\footnotesize Equal contributions \IEEEauthorrefmark{2}\footnotesize Equal advising}
}

\maketitle

\begin{abstract}
Preventing collisions in multi-robot navigation is crucial for deployment. This requirement hinders the use of learning-based approaches, such as multi-agent reinforcement learning (MARL), on their own due to their lack of safety guarantees. Traditional control methods, such as reachability and control barrier functions, can provide rigorous safety guarantees when interactions are limited only to a small number of robots. However, conflicts between the constraints faced by different agents pose a challenge to safe multi-agent coordination.

To overcome this challenge, we propose a method that integrates multiple layers of safety by combining MARL with safety filters. First, MARL is used to learn strategies that minimize multiple agent interactions, where multiple indicates more than two. Particularly, we focus on interactions likely to result in conflicting constraints within the engagement distance. Next, for agents that enter the engagement distance, we prioritize pairs requiring the most urgent corrective actions. Finally, a dedicated safety filter provides tactical corrective actions to resolve these conflicts. Crucially, the design decisions for all layers of this framework are grounded in reachability analysis and a control barrier-value function-based filtering mechanism.

We validate our \algoname{} framework in 1) hardware experiments using Crazyflie drones and 2) high-density advanced aerial mobility (AAM) operation scenarios, where agents navigate to designated waypoints while avoiding collisions. The results show that our method significantly reduces conflict while maintaining safety without sacrificing much efficiency (i.e., shorter travel time and distance) compared to baselines that do not incorporate layered safety. 
\href{https://dinamo-mit.github.io/Layered-Safe-MARL/}{[Project Webpage]}\footnote{ \href{https://dinamo-mit.github.io/Layered-Safe-MARL/}{Project Webpage: https://dinamo-mit.github.io/Layered-Safe-MARL/}}

\end{abstract}

\IEEEpeerreviewmaketitle

\section{Introduction}
\subsection{Motivation}

Collision-free operation is a fundamental requirement for multi-robot coordination tasks, such as formation control \cite{poonawala2014collision}, multi-robot payload transport \cite{liu2019task}, and autonomous navigation \cite{chu2012local}. When only two agents interact, there is a single collision-avoidance constraint, which can be easily managed using a safety filter. However, with multiple nearby agents, the resolution of a constraint between two agents can conflict with a constraint involving a third agent. These conflicts may result in suboptimal task performance, such as creating a severe gridlock that prevents agents from taking actions to achieve their tasks. More crucially, the inability to simultaneously satisfy all constraints can result in an agent taking an action that makes collision inevitable. 
In particular, this issue poses a significant safety risk in high-density scenarios like air taxi operations for Advanced Air Mobility (AAM) \cite{faa}.

Prior works have addressed safe multi-robot coordination problems by using model-based control methods like control barrier functions (CBFs) \cite{wang2017safety} and reachability analysis \cite{wang2020reachability}. %
Although CBFs and reachability provide a framework for safety assurance, they generally offer rigorous guarantees only when a single safety constraint is considered. The fundamental challenge in extending these methods to the multi-agent case is that the intersection of the safe sets corresponding to individual constraints (each derived from a pair of agents) does not necessarily represent the true safe set when all constraints are considered together (see Figure \ref{fig:running-example} for an example). In the Hamilton-Jacobi (HJ) reachability literature, the gap between these two regions is referred to as the ``leaky corner" \cite{mitchell2011scalable}. Agents that enter a leaky corner can no longer satisfy all safety constraints simultaneously and are inevitably forced to violate at least one.
Unfortunately, identifying leaky corners without recomputing the reachability analysis from scratch while incorporating all constraints remains an open problem \cite{lee2019, jiang2024guaranteed}. Performing reachability analysis or designing CBFs for all possible combinatorial interaction scenarios is computationally intractable. In summary, the fundamental challenge in achieving scalability with such control-theoretic methods in multi-agent settings lies in handling conflicting constraints.

In this work, we combine the control barrier-value function (CBVF) \cite{choi2021robust}, which is a CBF design method based on Hamilton-Jacobi reachability, with multi-agent reinforcement learning (MARL) into a layered safety architecture. \jcnote{This integration is driven by the essential role MARL can play in learning to strategically optimize task performance in multi-agent scenarios while proactively navigating potential conflicting constraints, which helps achieve safer and more effective behaviors.} 
As a result, our approach enhances both safety and performance to a level that neither safe control methods nor MARL alone could achieve.

\subsection{Contributions}
\begin{enumerate}
\item \textit{Architecture:} We propose a layered architecture that combines safety-informed MARL-based policy and CBVF-based safety filtering mechanism (Figure \ref{fig:summary_fig}), which can significantly mitigate the issues arising from conflicting constraints, such as inefficiency due to gridlock and the leaky corner problem.
\item \textit{Training method:} We propose a method to incorporate a CBVF-based safety filter into the training of MARL, considering two key aspects. First, a main challenge in this safety-constrained training is that the conservativeness introduced by safety filtering can hinder the exploration necessary for MARL to learn an effective policy. To address this, we introduce curriculum learning into the application of the safety filter, carefully balancing safety and exploration. Second, based on reachability analysis, we derive a conservative estimate of the safe region that is free from the issue of conflicting constraints (represented by the range $\dengagement$ in Figure \ref{fig:summary_fig}). Based on this estimate, the MARL policy is informed to minimize entry into this region, thereby avoiding potential conflicting constraints. Crucially, unlike many existing methods \cite{bejarano2024safety, zhang2023neural}, our proposed training approach does not impose safety through penalty terms directly penalizing the safety violation. Instead, MARL learns to enhance safety by making strategic decisions that mitigate conflicting constraints. This indirect approach significantly reduces unnecessary conservativeness, a common side effect of safe reinforcement learning-based methods.
\item \textit{Experimental validation}: We conduct hardware experiments using Crazyflie drones and perform simulations of high-density AAM scenarios to validate our hypothesis.
\end{enumerate}

The remainder of this paper is organized as follows. Section \ref{sec:related_works} provides background on safety for multi-agent coordination. Section \ref{sec:problem_statement} describes the system, environment, and problem statement. Section \ref{sec:safety-analysis} presents the safety analysis of multi-agent problems under collision avoidance constraints. Sections \ref{sec:methodology} and \ref{sec:results} present our proposed \algoname{} approach, the experiments performed, and the results obtained. We discuss some limitations of our approach in Section \ref{sec:limitations}. Finally, we conclude and propose future work in Section \ref{sec:conclusion}.
\begin{figure}
    \centering
    \includegraphics[width=\linewidth]{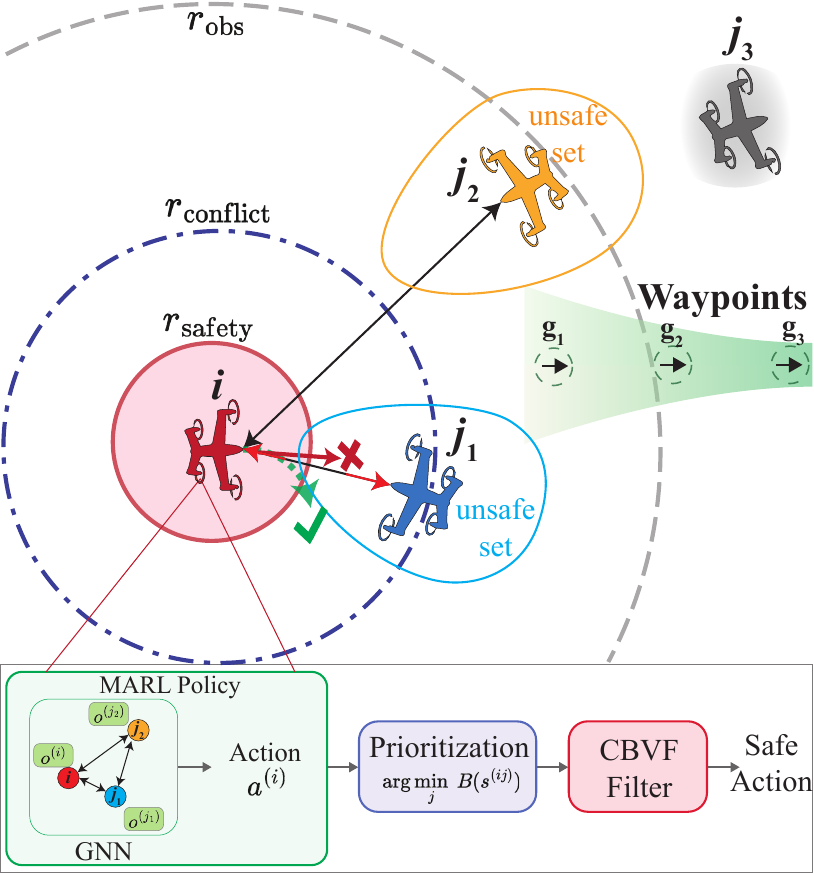}
    \caption{The figure shows our approach using an example scenario of four agents. Agent $i$ must reach the waypoints shown on the right. Our \algoname{} framework consists of three key components, and we describe it as applied through agent $i$: 1) The MARL policy generates an action based on the observation within the range $\dobservation$ while aiming to reduce the likelihood of entering other agents' potential conflict range $\dengagement$. 2) The prioritization module identifies the most critical neighboring agent in a potential collision scenario by evaluating the CBVF. In this example, agent $j_1$ is within the potential conflict region and forms a \textit{potential collision pair}. 3) The CBVF safety filter adjusts the action to ensure safe navigation.}
    \label{fig:summary_fig}
\end{figure}

\section{Related work}
\label{sec:related_works}
\subsection{Core related works---safety for multi-agent problems}
\textit{Classic Control barrier function (CBF)-based approaches.}
CBFs \cite{ames2016control} are used to design safe controllers via the principle of set invariance, and their application to safe multi-agent coordination has been explored in \cite{wang2017safety, jankovic2023multiagent, goarin2024decentralized}. A primary challenge in employing CBFs lies in constructing a valid CBF, which often requires system-specific, handcrafted designs \cite{wang2017safety}. In \cite{goarin2024decentralized, jankovic2023multiagent}, a more generic design principle based on exponential CBFs \cite{nguyen2016exponential} is employed; however, this approach does not address control input bounds. Another common limitation of existing methods is the treatment of multiple, potentially conflicting CBF constraints, which can lead to infeasibility. To address this, we adopt the CBVF-based framework to construct valid CBFs and handle multiple safety constraints in multi-agent coordination via a layered safety architecture.

\textit{Neural CBF-based approaches.}
While learning-based methods \cite{qin2021learning, zhang2023neural, zhang2025gcbf, zhang2025discrete} are proposed to design approximate CBFs, they lack deterministic safety guarantees due to the nonconvexity of the learning problems. 
Graphical CBF (GCBF) in \cite{zhang2023neural, zhang2025gcbf} offers a CBF based on local observations under multi-agent interaction, but how it learns to handle multiple constraints is not explicitly examined. Discrete Graphical Proximal Policy Optimization (DG-PPO) \cite{zhang2025discrete} proposes a model-free approach to learning decentralized CBFs and a safe control policy optimizing the task objective. Unlike DG-PPO, our approach leverages model-based information to compute Control Barrier Value Functions (CBVFs) \cite{choi2021robust} for pairwise collision avoidance, thereby ensuring deterministic safety guarantees. Finally, the aforementioned methods focus on learning safety certificates and policies for uncertain dynamical systems, often with high-dimensional system states. In contrast, our work specifically addresses the challenge of conflicting constraints in multi-agent interactions---a critical issue that persists even when each agent’s dynamics can be effectively represented by simple, low-dimensional models.

\textit{Reachability for multi-agent interaction.} Classical HJ reachability analysis computes the set of states that are guaranteed to be safe by computing the optimal control value function with dynamic programming. Prior works have investigated the reachability analysis for the special case of three-agent interaction \cite{ClaireHJ16} and using the value function to guide the planner, and its optimal control law is used in the tracking controller for safe multi-agent interaction \cite{wang2020reachability}.
Due to the curse of dimensionality in dynamic programming \cite{bellman1957dynamic}, the applicability of HJ reachability to high-dimensional systems is inherently limited. Recent work leveraged deep learning techniques to learn high-dimensional reachability \cite{bansal2021deepreach, hsu2021safety, hsu2023isaacs, zhu2024safe,li2025certifiable}, demonstrating their use in multi-agent collision avoidance scenarios. However, the learned solution does not generalize to new scenarios involving different agents. Additionally, the question of how to certify the safety of the learned safe set is still an open research topic \cite{yang2024scalable, lin2024verification,li2025certifiable}. %
Finally, alternative methods for solving reachable sets with over-approximation have been used in the context of multi-agent problems and air traffic management \cite{taye2022reachability, bertram2020distributed}.

\textit{Safe multi-agent reinforcement learning (MARL).} A common approach to safe MARL is through constrained Markov decision processes (CMDPs) \cite{altman2021constrained}. In theory, CMDPs have no duality gap under certain assumptions \cite{paternain2019constrained}, but in practice, training with PPO-Lagrangian \cite{ray2019benchmarking}, and its multi-agent variant \cite{gu2021multi} often suffers from instability due to suboptimal policies and inaccurate Lagrange multipliers. Another approach is shielded MARL, which uses safety filters \cite{hsu2023safety, bejarano2024safety} to enforce safety during training and deployment. Originally introduced for single-agent RL \cite{alshiekh2018safe}, this method has been extended to multi-agent settings \cite{elsayed2021safe}. %
However, designing a shielding policy remains challenging due to the curse of dimensionality.

\subsection{Other related works}
\textit{Multi-agent reinforcement learning.}
Multi-agent extensions of single-agent RL algorithms, such as PPO \cite{schulman2017proximal} and DDPG \cite{lillicrap2015continuous}, include MA-PPO \cite{yu2022surprising} and MA-DDPG \cite{lowe2017multi}, both of which assume full observability. However, in many real-world applications, such as autonomous driving, agents have to make decisions based on their local information and coordinate effectively with other agents. A key challenge in MARL is the decentralized decision-making under partial observation. InforMARL \cite{informarl_icml} leverages graphical neural networks for information sharing to develop an efficient, coordinated learning framework for acquiring a high-performance MARL policy. Our approach builds on InforMARL to allow agents to make decentralized decisions based on their local observations. %

\textit{Control and game-theoretic methods.} In collaborative multi-agent settings, Model Predictive Control (MPC) has been used to ensure safe control \cite{slegers2006nonlinear, el2015distributed, eren2017model, zhou2017fast, goarin2024decentralized} and its integration with MARL is explored in \cite{dawood2024safemultiagentreinforcementlearning}. However, the complexity of the constrained optimization involved in MPC often limits its real-time execution in complex systems. When agents pursue distinct objectives, the problem becomes a non-cooperative game, with various solution approaches proposed in prior work \cite{bacsar1998dynamic,evans2016airline,mylvaganam2017differential,fridovich2020efficient, bhatt2023efficient, FisacWhoPlays}. %
However, treating the safety constraints in the game-theoretic solutions remains an open research challenge \cite{laine2023computation,li2024computation}.

\textit{Collision avoidance \& conflict resolution for air traffic control.} With the growing interest in AAM applications such as drone deliveries and air taxi services, developing a scalable low-altitude air traffic management system that is automated or semi-automated has become an urgent need. Compared to current aviation, AAM operations are expected to be large-scale, ad hoc, on-demand, and dynamic. These characteristics motivate the development of a new air traffic management (ATM) framework that can achieve scalable, efficient, and collision-free operations \cite{faa, Kopardekar2016UnmannedAS}. 

Existing work on collision avoidance and conflict resolution for ATM is categorized into \textit{strategic deconfliction}, which focuses on preemptive deconfliction, and \textit{tactical deconfliction}, which focuses on imminent proactive collision avoidance. A substantial body of work leverages control theory to design methods for strategic deconfliction. An early work proposed a flight mode switching framework derived from a hybrid automaton and reachability-based analysis \cite{tomlin1998}. As this method suffers from the computational complexity of HJ reachability, \cite{ClaireHJ16} uses a mixed integer program to assign avoidance responsibilities and resolve conflicts cooperatively. The work in \cite{chen2017reachability} alternatively organizes vehicles into platoons on structured air highways, treating each platoon as a coordinated entity. While these methods provide strong safety guarantees, they rely on a predefined set of coordination rules for those guarantees to hold. Additionally, the approach in \cite{subliminal} uses preemptive strategic speed adjustments to prevent perceived conflicts without requiring controller intervention. Finally, a negotiation-based framework is introduced in \cite{Min2022negotiation} for collision-free strategic planning.

In parallel, the aviation community employs tactical collision avoidance modules as the final layer for safety. The Traffic Alert and Collision Avoidance System (TCAS) is an onboard system developed in the 1980s for conventional airliners, designed to detect and prevent collisions through vertical separation \cite{kuchar2004safety}. A method for tactical collision avoidance through horizontal resolution is also proposed in \cite{erzberger2010algorithm}. The successor of TCAS, the Airborne Collision Avoidance System (ACAS) X, integrates predictive modeling with real-time sensor inputs \cite{holland2013optimizing} using a partially observable Markov decision process framework. These existing methods crucially rely on the assumption that no more than two vehicles are involved in a single conflict resolution. This assumption is typically upheld by the upstream strategic deconfliction decisions.

Various methods in both strategic and tactical deconfliction are integrated further into layered, hierarchical decision-making architectures, enhancing the safety of ATM \cite{TOMLIN1996, tang2008tactical, erzberger2012automated}. Our work is inspired by these layered approaches in the aviation community; however, the separations between layers underlying the existing approaches do not directly apply to high-volume AAM scenarios. As such, we have to consider how to achieve safe collision avoidance in instances of simultaneous multi-vehicle engagement.

Finally, MARL-based methods have also been explored in air traffic control to ensure tactical deconfliction through preconditioned strategic planning \cite{Chen&PengWei}, demonstrating improved safety and efficiency over rule-based methods. However, the available actions of each agent in this work are limited to the adjustment of speed or position.

\section{Problem Formulation}
\label{sec:problem_statement}

In this section, we define the system, environment, each agent's dynamics, and their safety requirement, and the problem statement.
\subsubsection{Preliminaries}
We formulate our multi-agent system as a Decentralized Partially Observable Markov Decision Process (Dec-POMDP) defined by the tuple $\langle N, S, O, \mathcal{A}, P, R,\gamma \rangle$, where:
\begin{itemize}
    \item $N$ is the number of agents
    \item $s^{(i)}\in \mathbb{R}^D$ is the state of each agent with $D$ as the state dimension, including their position variables, %
    \item $s\in S = \mathbb{R}^{N\times D}$ is the environment state, which is the concatenation of each agent's states and the state space of the environment, respectively,
    \item $o^{(i)}=O(s^{(i)})\in \mathbb{R}^{d}$ is the observation of agent $i$,
    \item $\aagent \in \aset$ is the action space for agent $i$. $\aseq$ denotes the sequence of actions for timesteps $k = 0, 1, \cdots$, %
    \item $P(s'|s, a)$ is the transition probability from $s$ to $s'$ given the joint action $a$, the concatenation of each agent's actions,
    \item $R(o^{(i)}, a^{(i)})$ is the common reward function of all agents,
    \item $\gamma \in [0,1)$ is the discount factor.
\end{itemize}
The objective is to find a policy $\Pi= \left(\pi^{(1)}, \cdots, \pi^{(N)}\right)$, where $\pi^{(i)}\left(a^{(i)}|o^{(i)}\right)$ is agent $i$'s policy that selects an action based on its observation.

\subsubsection{Agent's dynamics \& safety constraint}

We consider each agent's dynamics as a sampled data system, meaning that their underlying physical dynamics evolve continuously in time, but their actions are updated at discrete timesteps. Their continuous dynamics are given as
\begin{equation}
\label{eq:agent-dynamics}
\dot{s}^{(i)}(t) = f^{(i)} (\sagent(t), \aagent(t)), \quad \sagent(0) = \sagent_0,
\end{equation}
and their action is updated at every sampling time $\samplingtime$---i.e. the action sequence $\aseq$ maps to the signal in time given as $\aagent(t) \equiv \aagent_\timestep$ for $t \in [\timestep \Delta t, (\timestep+1) \Delta t )$. Their discrete-time state is given as $\sagent_\timestep = \sagent(\timestep \Delta t)$.

The primary safety constraint we consider in this work is the collision avoidance between agents. For all time $t \ge 0$, agents must satisfy
\begin{equation}
\label{eq:safety}
    \distance(\sagent(t), \sagentj(t)) \ge \dsafety, \quad \text{for}\;\; \forall i \neq j,
\end{equation}
where $\dsafety$ is the safety distance.

In the subsequent safety analysis, we consider the relative dynamics between a pair of agents, $(i, j)$. We define the relative state between the agents, which can be given as $\srelative := \mathrm{rel}(\sagent, \sagentj)$, where $\mathrm{rel}$ is a mapping from two agents' states to the relative state. We assume that relative position variables are part of $\srelative$; thus, $\distance$ can be defined based on $\srelative$. The dynamics of the relative state are described by 
\begin{equation}
\label{eq:relative}
\dot{s}^{(ij)}(t) = f^{(ij)} \big( \srelative(t), \aagent(t), \aagentj(t)\big),    
\end{equation}
which is derived from \eqref{eq:agent-dynamics}.

\subsubsection{Observations}

For each agent to learn an effective policy for performance and safety, the observations $\oagent$ need to contain adequate information. We make the following assumptions that are generic for many multi-agent robot tasks.
\begin{itemize}
    \item Each agent $i$'s observation $\oagent$ consists of its local observations of other agents and entities relevant to their task goals (e.g., goal location) within their observation range defined as $\dobservation$ and any additional information needed for its task. Thus, the reward given to agent $i$ at each timestep, $R(\oagent, \aagent)$, is defined based on its local observation and action.
    \item We define $I(i):\mathbb{N}\!\rightarrow\!2^{\mathbb{N}}$ as the index set of the agents within the observation range of agent $i$. We assume that $o^{(i)}$ contains information that can be used to reconstruct $\srelative$ from $o^{(i)}$ for all $j\!\in\!I(i)$. Thus, for agent $i$, with its observation, it is feasible to execute a feedback policy on $\srelative$ if agent $j$ is within its observation range. This assumption will be used in the design of our safety framework.
\end{itemize}

\subsubsection{Problem statement}

To sum up, the decentralized multi-agent coordination problem, subjected to the collision avoidance constraint we consider in this work, can be described as
\begin{equation}
    \begin{aligned}
        \max_{\pi^{(i)}} &  \; \mathbb{E} \left[ \sum_{k=0}^\infty R(\oagent_k,\aagent_k) \right] \\
        \textrm{s.t. }& s_{k+1} \sim P(s \;| \; s_k,a_k)\\
        & a_k^{(i)}\sim \pi^{(i)} (a^{(i)}\;|\; o_k^{(i)})  \\ %
        \distance(\sagent(t),  \sagentj(t)&) \ge \dsafety, \quad \text{for}\;\; \forall i \neq j, \forall t \ge 0,
    \end{aligned}
\end{equation}
where each agent's action $a_k^{(i)}$ is determined by their policy $\pi^{(i)}$, based on their local observations. The agent learns to maximize its objective subject to its collision avoidance constraint.
\section{Safety Analysis}
\label{sec:safety-analysis}
In this section, we present the safety analysis of the multi-agent problem under collision avoidance constraints. Specifically, we first derive the safety analysis for a pair of agents and then investigate how it applies to the multi-agent scenario.

\subsection{Collision avoidance for a pair of agents}

To ensure $\distance(\srelative(t)) \ge \dsafety$ for all $t \ge 0$, we consider the following cost function, which captures the closest relative distance along the trajectory:
\begin{equation}
\label{eq:reachability-cost}
    J(\srelative_0, \aseq, \aseqj) := \min_{t \in [0, \infty)} \distance \big( \srelative(t) \big).
\end{equation}
If $J(\srelative_0, \aseq, \aseqj) \ge \dsafety$, the agents $i$ and $j$ are rendered safe (collision-free) by their actions.

\subsubsection{Reachability analysis for computing the maximal safe set}\label{subsubsec:reachability}

The agent pair prioritizing safety would want to maximize \eqref{eq:reachability-cost} to move away from each other. From this intuition, we can consider the following optimal control problem
\begin{equation}
\label{eq:reachability-value}
V(\srelative_0) := \max_{\aseq, \aseqj} J(\srelative_0, \aseq, \aseqj).
\end{equation}
Solving $V$ is a specific type of reachability problem called the minimal Backward Reachable Tube (BRT) problem \cite{wabersich2023}. To see this, consider $\unsafeset = \{\srelative \;| \;\distance(\srelative) < \dsafety \}$, the near-collision region, as the target set. The minimal BRT of $\unsafeset$ is defined as
\begin{equation}
\brt(\unsafeset) := \{\srelative_0 \;| \; \forall \aseq, \aseqj, \exists t \ge 0 \; \text{s.t.}\; \srelative(t) \in \unsafeset\},
\end{equation}
which encapsulates a region from which no action sequence can prevent the relative state from entering the near-collision region $\unsafeset$. Using the definition in \eqref{eq:reachability-value}, we can express $\brt(\unsafeset)$ as
$\{ \srelative_0 \; | \; V(\srelative_0) < \dsafety \}$. 

The complement of $\brt(\unsafeset)$ becomes the \textit{maximal} safe set from which the agent pair can avoid collisions since it encompasses all the states from which there exist action sequences $\aseq$ and $\aseqj$ that can avoid collision. This maximal safe set is denoted as
\begin{equation}
    \label{eq:pair-safe-set}
\safesetpair := \{\srelative_0 \;| \; \exists \aseq, \aseqj,\; \text{s.t.}\; \forall t \ge 0,  \srelative(t) \notin \unsafeset\},
\end{equation}
and satisfies
\begin{equation}
\safesetpair = \big(\brt(\unsafeset)\big)^c = \{ \srelative_0 \; | \; V(\srelative_0) \ge \dsafety \}.
\end{equation}

\noindent We use the open-source library in \cite{hj_reachability} to compute \eqref{eq:reachability-value} and $\safesetpair$, which computes the Hamilton-Jacobi (HJ) partial differential equation (PDE) associated with the BRT problem \cite{bansal2017hj}. 

\begin{figure*}[t!]
    \centering
    \includegraphics[width=\textwidth]{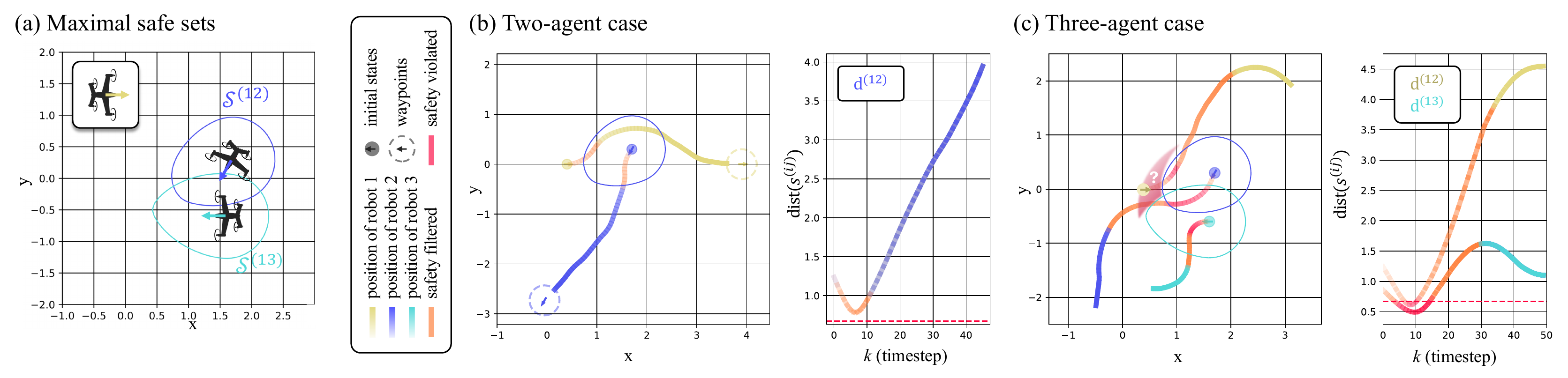}
    \vspace{-2em}
    \caption{Running example illustrating the CBVF-based safe sets, safety filtering, and the leaky corner issue. (a) Visualization of the ego agent ($s^{(1)} = [0.4\si{km}, 0\si{km}, 0\degree, 110$ $\si{kt}]$)'s maximal safe sets (exterior of the level sets) against two agents, $s^{(2)}\!=\![1.7 \si{km}, 0.3\si{km}, -120\degree, 110\si{kt}]$ and $s^{(3)} \!=\![1.7 \si{km}, -0.6\si{km}, -180\degree, 60\si{kt}]$. (b) In the two-agent case, each agent executing their CBVF safety filters \eqref{eq:cbvf-qp-single} successfully prevents collision. (c) In the three-agent case, although agent 1 started inside the intersection of $\mathcal{S}^{(12)}$ and $\mathcal{S}^{(13)}$, it is not able to prevent safety violation. This is because the initial state of robot 1 is in the leaky corner.}
    \vspace{-1em}
    \label{fig:running-example}
\end{figure*}

\begin{example}
We consider the reduced-order dynamics of an autonomous air taxi given as 
\begin{equation}
\begin{aligned}
 \dot{x}=v \cos \theta,\;\; \dot{y}=v \sin \theta, \;\; \dot{\theta}= \omega, \;\;\dot{v} = \mathrm{a},
\end{aligned}
\label{eq:airtaxi}
\end{equation}
where the robot state consists of $\sagent = [x;y;\theta;v]$, representing the positions, heading, and speed. The allowable actions are  $\aagent = [\omega, \mathrm{a}]$, corresponding to the angular rate and the longitudinal acceleration, respectively. The speed is limited to the range of $[v_{\min}, v_{\max}]$. The action space is defined as $\aset = [-\omega_{\max},\omega_{\max}]\times [\mathrm{a}_{\min}, \mathrm{a}_{\max}]$. The parameters we use are defined in Table \ref{tab:vehicle_dynamics} and are explained in more detail in Section \ref{subsec:bay-area-sim}.

The relative state $\srelative = [x^{(ij)};y^{(ij)};\theta^{(ij)};v^{(i)};v^{(j)}]$ includes the relative position and heading of agent $j$ from agent $i$'s perspective, where $x$-axis is in the direction of the agent $i$'s heading. The relationship between $\srelative$ and ($\sagent$, $\sagentj$) and the relative state dynamics are given in Appendix \ref{appendix-airtaxi-dynamics}.

The computation of $V$ was completed within an hour using an Nvidia RTX A4500 GPU. \footnote{The computation time is not a critical concern in our setting, as the value function is computed offline rather than during real-time deployment.} 
The computed maximal safe set $\safesetpair$, defined in the relative state space of $\srelative$, can be projected to the position space of the ego agent (agent $i$), which incorporates all safe positions of the agent that can ensure collision avoidance, given its heading, speed, and the opponent agent (agent $j$)'s state. Examples of $\safesetpair$ are visualized in Fig. \ref{fig:running-example} (a) with respect to two different opponent states. 
\end{example}

\subsubsection{Control barrier-value function-based safety filtering}

Next, we investigate how the computed value function $V$ can be used to constrain the relative state $\srelative$ to stay within the safe set $\safesetpair$. Since each agent makes their primary decision based on their MARL policy in our framework, we consider how to filter the MARL action if it is potentially unsafe. 

To achieve this safety filtering mechanism, we consider the barrier constraint-based mechanism of the CBFs. For a generic state variable $s$ and its dynamics $\dot{s} = f(s, a)$, if a function $B(s)$ satisfies the barrier constraint given by
\begin{equation}
    \nabla B(s) \cdot f(s, a) + \gamma B(s) \ge 0,
\label{eq:barrier-constraint}
\end{equation}
for every state inside the zero-superlevel set of $B$, i.e. $\forall s \in \{ s \;|\; B(s) \ge 0 \}$, and for some constant $\gamma > 0$, we can guarantee that $B(s(t)) \ge 0$, for all $t \ge 0$ \cite{ames2016control}. Thus, the state can be maintained to stay within $\{ s\; | \; B(s) \ge 0 \}$.

If the computed reachability value function $V$ in \eqref{eq:reachability-value} is almost-everywhere differentiable, we can construct a CBF by taking $B(\srelative) = V(\srelative) - \dsafety$. 
This choice of $B$ satisfies the barrier constraint \eqref{eq:barrier-constraint} almost everywhere, and results in the maximal safe set to be represented as the CBF zero-superlevel set, $\safesetpair = \{\srelative \; | \; B(\srelative) \ge 0 \}$. Such usage of the reachability value function as the CBF is referred to as the Control Barrier-Value Function (CBVF) \cite{choi2021robust}.

\begin{remark}
In \cite{zhang2025discrete}, $V$ is denoted as a constraint-value function and is used to learn Graphical CBF (GCBF) for uncertain dynamics. In this work, we consider its exact computation for a hard safety guarantee. However, our approach can be combined with the learning methods proposed in \cite{zhang2025discrete} or other learning-enabled approaches \cite{fisac2019bridging, chilakamarri2024reachability} to be extended to agents subjected to uncertain dynamics.
\end{remark}

\begin{remark}
For certain types of dynamics, the value function can be discontinuous without introducing a discount factor in time to the cost function \eqref{eq:reachability-cost} \cite{choi2023forward}. %
\end{remark}

Finally, the CBVF-based safety filtering can be implemented in a decentralized manner, with each agent executing its own safety filter. Here, we assume that the agents are \textit{cooperative for safety}, meaning that although their unfiltered actions can be selfish, their final filtered actions are coordinated to avoid collision with each other. To achieve this coordination, agent $i$ and agent $j$ can individually solve the identical optimization program defined as 
\vspace{0.5em}
\hrule
\vspace{0.5em}
\noindent CBVF Safety Filter (cooperative case):
\begin{align}
(\asafe &, \;\asafej) = \arg\!\!\!\!\!\!\!\min_{(\aagent, \aagentj) \in \aset} ||\aagent\!-\!\amarl||^2 + ||\aagentj\!-\!\amarlj||^2 \nonumber \\
\textrm{s.t.} \;\; & \nabla B(\srelative)\!\cdot\!  f^{(ij)} \big( \srelative, \aagent, \aagentj\big)\! + \!\gamma B(\srelative) \ge 0,
\label{eq:cbvf-qp-single}
\end{align}
\hrule
\vspace{0.5em}
\noindent and then execute their own action. If the dynamics $f^{(ij)}$ are affine in actions, the optimization becomes a quadratic program \cite{ames2016control, choi2021robust}.

If the opponent agent is non-cooperative for safety, agent $i$ can solve for its own safe action considering the worst-case possible action of the opponent:
\vspace{0.5em}
\hrule
\vspace{0.5em}
\noindent CBVF Safety Filter (non-cooperative case):
\begin{align}
\asafe & = \arg\!\!\min_{\aagent \in \aset} ||\aagent\!-\!\amarl||^2 \nonumber \\
\textrm{s.t.} \;\; & \min_{\aagentj \in \aset} \nabla B(\srelative)\!\cdot\!  f^{(ij)} \big( \srelative, \aagent, \aagentj\big)\! + \!\gamma B(\srelative) \ge 0, \nonumber
\end{align}
\hrule
\vspace{0.5em}
\noindent where now $B$ has to be constructed based on a value function for a differential game, which considers the opponent's worst-case actions \cite{mitchell2005}, given as 
\begin{equation}
\label{eq:reachability-value-worst-case}
V_{\textrm{worst}}(\srelative_0) := [\;\min \hspace{-2em}\max_{\{\aagentj_k, \aagent_k\}_{k\ge0}\;\;\;\;\;\;\;\;} \hspace{-2em}] \; J(\srelative_0, \aseq, \aseqj).
\end{equation}
Here, $[\min \max]$ denotes the alternating operation $\min$ (over $\aagentj_k$) and $\max$ (over $\aagent_k$). The computation of this worst-case value function can be done similarly to the computation of $V$ by solving the min-max HJ PDE \cite{bansal2017hj}.

\begin{example}
In the two-agent case, in Figure \ref{fig:running-example} (b), the initial relative state between agents 1 and 2 is set near the boundary of the maximal safe set $\mathcal{S}^{(12)}$. By each agent applying the CBVF safety filter, both agents reach their goals safely under the safety-filtered MARL actions.
\end{example} 

\subsection{Analysis of the multi-agent case}

We begin the analysis of this section by continuing with the running example of the multi-agent case:

\begin{example}
In Figure \ref{fig:running-example} (c), we now consider the case where a third agent is introduced. The relative states still remain within the pairwise maximal safe sets $\mathcal{S}^{(12)}$, $\mathcal{S}^{(13)}$, and $\mathcal{S}^{(23)}$. Despite all agents actively attempting to avoid collisions, their relative distances fall below $\dsafety$. As mentioned in the introduction, this demonstrates the issue of conflicting constraints, implying that although agent 1's initial state did not cross the boundaries of the individual safe sets, it may already be outside the true safe set when considering all interactions simultaneously. Computing this true safe set requires defining the relative dynamics of the three agents, which increases the system's dimensionality. While approximations of this set have been computed, such as in \cite{bansal2021deepreach}, the computation of this multiple-agent safe set is challenging.
\end{example}

As can be seen in the above running example, it is crucial to prevent the agents from falling into the region in which one safety constraint can potentially conflict with the other, i.e., the leaky corners. Although their exact computation is hard, we define the region that can tightly overapproximate this potential conflict region.

\begin{proposition}
\label{prop:conflict-free-region}
Define 
\begin{equation}
\resizebox{0.98\hsize}{!}{$\displaystyle
\begin{aligned}
& \hat{\mathcal{S}}^{(i)} := \Big\{\{\sagentj\}_{j \in I(i)} \;| \; \forall j \in I(i), V(\srelative) \ge \dsafety, \& \\
& \nexists j_1, j_2 \in I(i)\; \text{s.t.} \ V_{\textrm{worst}}(s^{(ij_1)})\!<\!\dsafety\;\&\; V_{\textrm{worst}}(s^{(ij_2)})\!<\!\dsafety\Big\}
\label{eq:def-conflict-free-region}
\end{aligned}
$}
\end{equation}
where $V$ is defined in \eqref{eq:reachability-value}, and $V_{\textrm{worst}}$ is defined in \eqref{eq:reachability-value-worst-case}. Note the difference between $V$ and $V_{\textrm{worst}}$. Then for any opponent agent states $\{\sagentj\}_{j \in I(i)}\in\hat{\mathcal{S}}^{(i)}$, there exists $\aseq$ and $\aseqj$ for all $j \in I(i)$, such that $\forall t \ge 0$,  $\srelative(t) \notin \unsafeset$ for all $j \in I(i)$. In other words, set $\hat{\mathcal{S}}^{(i)}$ can be maintained forward invariant.
\end{proposition}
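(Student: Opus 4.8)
The plan is to exploit the fact that, by the second defining condition of $\hat{\mathcal{S}}^{(i)}$, at most one opponent is genuinely ``hard'' in the worst-case sense, and to assign avoidance responsibilities accordingly. First I would partition the in-range opponents into $J_1 := \{ j \in I(i) : V_{\textrm{worst}}(\srelative) < \dsafety \}$ and $J_0 := I(i) \setminus J_1$. The stated condition that no pair $j_1, j_2$ both have $V_{\textrm{worst}} < \dsafety$ is exactly $|J_1| \le 1$, so there is at most one \emph{critical} opponent $j^\ast$. The point of this bound is that agent $i$ controls a single input signal and can therefore dedicate a genuinely cooperative strategy to only one opponent; with two critical opponents one would be back in the leaky-corner regime.

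For the critical opponent $j^\ast$ (if it exists), I would invoke the first condition $V(s^{(ij^\ast)}) \ge \dsafety$. By the definition of $V$ in \eqref{eq:reachability-value} as the cooperative maximum over both action sequences, this guarantees the existence of a jointly cooperative pair of signals $\aseq$ and $\boldsymbol{a}^{(j^\ast)}$ for which $\min_{t \ge 0} \distance(s^{(ij^\ast)}(t)) \ge \dsafety$, i.e. the pair $(i,j^\ast)$ stays out of $\mathcal{L}^{(ij^\ast)}$ for all time. I would then commit agent $i$ to this $\aseq$.

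For every non-critical opponent $j \in J_0$ I would convert $V_{\textrm{worst}}(\srelative) \ge \dsafety$ into a guarantee that agent $j$ itself can keep separated from $i$ no matter what $i$ does. Here I would use the symmetry of the pairwise relative dynamics \eqref{eq:relative} between two identical agents, as in the running example: the worst-case value with the maximizing and minimizing roles swapped, evaluated at the mirrored relative state, coincides with $V_{\textrm{worst}}(\srelative)$. Hence $V_{\textrm{worst}}(\srelative) \ge \dsafety$ furnishes an evasion strategy for $j$ that holds against \emph{any} ego signal, in particular against the $\aseq$ already committed to $j^\ast$. Assigning each $j \in J_0$ this strategy and closing the loop yields realized signals $\boldsymbol{a}^{(j)}$; since the $(i,j)$ distance for $j \in J_0$ does not depend on the other opponents' inputs, every pairwise constraint holds simultaneously, and reading off the resulting trajectories gives the open-loop sequences required by the statement. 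Forward invariance follows because the same construction can be reapplied at every reached state.

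The main obstacle is precisely this role-swap step: $V_{\textrm{worst}}$ as defined certifies that the \emph{ego} can evade the worst-case opponent, whereas the $J_0$ argument needs the \emph{opponent} to evade the worst-case ego. These are values of two distinct differential games and are not equal in general; the equivalence relies on the symmetry of the two-agent relative dynamics and on identical action sets. I would therefore make this symmetry assumption explicit (it holds for the homogeneous air-taxi model), or otherwise restate the proposition with the role-swapped worst-case value. A secondary, more routine point is the passage from the feedback strategies natural to reachability values to the open-loop sequences $\aseq, \aseqj$ in the claim, which is handled by simulating the closed loop and checking that the assigned strategies contain no circular dependence (agent $i$ reacts only to $j^\ast$, and each $j \in J_0$ reacts only to $i$).
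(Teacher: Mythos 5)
Your proposal follows essentially the same route as the paper's own proof: the second condition in \eqref{eq:def-conflict-free-region} yields at most one critical opponent (your $j^\ast$, the paper's $j_{\textrm{near}}$); the cooperative value $V(s^{(ij^\ast)})\ge \dsafety$ supplies the joint avoidance sequences for that pair; and every remaining opponent is handled through the robust invariance of the superlevel set $\{V_{\textrm{worst}}\ge \dsafety\}$, which the paper obtains by citing \cite[Proposition 4]{fisac2019tac}. Where you add genuine value is the role-swap obstacle you flag: in \eqref{eq:reachability-value-worst-case} agent $i$ is the maximizer and agent $j$ the minimizer, so the cited invariance result literally guarantees that the \emph{ego} can remain in the set regardless of $\aseqj$, whereas the proof needs each non-critical $j$ to remain in the set regardless of the $\aseq$ already committed to $j^\ast$; the paper asserts ``there exists $\aseqj$ \ldots regardless of $\aseq$'' without comment, silently exchanging controller and disturbance. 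Your observation that this exchange is licensed only by the symmetry of the pairwise relative dynamics for homogeneous agents with identical action sets (the role-swapped game at the mirrored relative state has the same value), and your suggestion to make that an explicit hypothesis or to restate the proposition with the role-swapped value, is precisely the justification the paper's proof elides --- and it does hold in the paper's air-taxi and quadrotor instantiations. Your remaining points are sound and routine: each pair's relative state \eqref{eq:relative} depends only on $(\aagent,\aagentj)$, so the assignments (ego reacts only to $j^\ast$, each $j\in J_0$ reacts only to the ego) have no circular dependence and the closed-loop realizations give the required open-loop sequences; and forward invariance can be made slightly more direct than ``reapply the construction,'' since along the chosen signals one has $V(s^{(ij^\ast)}(t))\ge \dsafety$ and $V_{\textrm{worst}}(\srelative(t))\ge \dsafety \Rightarrow V(\srelative(t))\ge \dsafety$ for $j\in J_0$, so both defining conditions of $\hat{\mathcal{S}}^{(i)}$ persist for all $t\ge 0$.
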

\begin{proof} The second condition in \eqref{eq:def-conflict-free-region} allows at most one opponent agent to enter the area in which $V_{\textrm{worst}}(\srelative) < \dsafety$. We denote this agent as $j_{\textrm{near}}$. For $(i, j_{\textrm{near}})$, since $V(s^{(i j_{\textrm{near}})}) \ge \dsafety$ based on the first condition in \eqref{eq:def-conflict-free-region}, agent $i$ and agent $j_{\textrm{near}}$ are within their CBVF safe set $\mathcal{S}^{(ij_{\textrm{near}})}$ and can select their action sequences $\aseq$ and $\boldsymbol{a}^{(j_{\textrm{near}})}$, such that $s^{(i j_{\textrm{near}})}(t) \notin \mathcal{L}^{(i j_{\textrm{near}})}$ for all $t\ge0$.

Next, we consider all other agents $j \in I(i) \setminus \{j_{\textrm{near}}\}$. Based on \cite[Proposition 4]{fisac2019tac}, for any Lipschitz continuous $V_{\textrm{worst}}$, its level set is a robust control invariant set. Thus, for all $j \in I(i) \setminus \{j_{\textrm{near}}\}$, there exists $\aseqj$ that results in $V_{\textrm{worst}}(\srelative (t)) \ge \dsafety$ for all $t \ge 0$, regardless of $\aseq$, ensuring $\sagentj(t) \in \hat{\mathcal{S}}^{(i)}$. 
\end{proof}

Intuitively, the set $\hat{\mathcal{S}}^{(i)}$ prevents conflict of constraints by allowing only one agent to coordinate for collision avoidance with the ego agent and by prohibiting the other agents from entering the worst-case safe set. These other agents are able to stay away from the pair $(i, j_{\textrm{near}})$ due to the robust invariance property of the level set of $V_{\textrm{worst}}$.

\textit{Practical implementation: } In practice, enforcing each agent to stay within $\hat{\mathcal{S}}^{(i)}$ can be computationally expensive since we have to evaluate $V$ and $V_{\textrm{worst}}$ for all pairs of interaction. In the next section, we use this analysis to inform MARL to implicitly learn not to enter this region. For this, we define the \textit{potential conflict range} as below:
\begin{align}
\label{eq:conflict_distance}
    \dengagement & := \min_{r} \; r \\
\text{s.t.}  & \quad V_{\textrm{worst}}(\srelative) \ge \dsafety \;\;\forall \srelative \;\text{s.t.}\; \distance(\srelative) \ge r. \nonumber
\end{align} 
Then, the set of opponent agent states is defined as
\begin{equation}
\resizebox{0.98\hsize}{!}{$\displaystyle
\begin{aligned}
\label{eq:def-conflict-free-region-approx}
& \tilde{\mathcal{S}}^{(i)} := \Big\{\{\sagentj\}_{j \in I(i)} \;| \; \forall j \in I(i), V(\srelative) \ge \dsafety, \& \\
& \hspace{-0.5em}\nexists j_1, j_2 \in I(i) \; \text{s.t.} \;\distance(s^{(i j_1)})\!<\!\dengagement\;\&\; \distance(s^{(i j_2)})\!<\!\dengagement\Big\}
\end{aligned}
$}
\end{equation}
This is an underapproximation of the true conflict-free set $\hat{\mathcal{S}}^{(i)}$ by definition \eqref{eq:conflict_distance}. To ensure safety, we want to restrict the number of opponent agents entering this region to be at most one. %

Our analysis requires that the observation range be larger than the potential conflict range, $\dobservation > \dengagement$. Rather than a restriction, this serves as a design guideline for the observation stack of the robot for safe multi-robot coordination. As shown in Figure~\ref{fig:summary_fig}, the concept of the potential conflict range divides a robot’s proximity into three layers: (1) the range $\distance(\srelative) < \dsafety$, where collision is imminent; (2) the range $\dsafety < \distance(\srelative) < \dengagement$, where engaging with multiple vehicles may introduce safety risks; and (3) the region $\dengagement < \distance(\srelative)$, where the maneuvers of other agents pose minimal safety concerns. A similar three-layer structure was proposed and manually designed in \cite{ghosh2000maneuver}. However, our approach provides a theoretical foundation for defining these boundaries based on reachability analysis.

\begin{remark}
\label{remark:theory-limitation}(Limitation) The set $\hat{\mathcal{S}}^{(i)}$ is a conflict-free safe set only from agent $i$'s perspective. In other words, it does not guarantee that the collision-avoidance maneuvers of other agents $j \in I(i)$ will not interfere with one another. Addressing this issue requires analyzing the combinatorial number of possible interactions, which remains an open problem. In our work, we address this challenge by training the MARL policy to learn strategies that mitigate these conflicts.
\end{remark}

\section{Multi-agent Reinforcement Learning with Layered Safety}
\label{sec:methodology}
\subsection{Extending InforMARL for improved decentralized decisions}
Our work builds upon the InforMARL architecture \cite{informarl_icml}, a MARL algorithm that solves the multi-agent navigation problem by using a graph representation of the environment, enabling local information-sharing across the edges of the graph. InforMARL uses graph neural networks (GNNs) to process neighborhood entity observations, allowing the framework to operate with any number of agents and provide scalability without changing the model architecture. Each agent has a set of neighboring agents within its observation range, $\dobservation$, and shares its relative position, speed, and goal information with these neighbors.
Agents are tasked to navigate to their respective goal positions.
When agents reach their respective goals, they get a goal reward ${\mathcal{R}}_\mathrm{goal}(\oagent_\timestep, a^{(i)}_\timestep)$.

\jcnote{The extensions we make to the baseline InforMARL to incorporate the layered safety framework and to make it more practical for multi-robot navigation tasks are as follows:}

1) Sequential goal point tracking: In the updated framework, the agents navigate to a sequence of waypoints, each specified by its position and the desired direction and speed, leading to the final goal (as shown by the green circles in Figure \ref{fig:summary_fig}). At each time step, an agent gets the following additional rewards, ${\mathcal{R}}_\mathrm{tracking}(\oagent_\timestep, a^{(i)}_\timestep)$ which are computed based on the heading and speed of the agent relative to the current target waypoint. \jcnote{The details of these terms are presented in Appendix \ref{appendix-goal-reward}.}

2) Model architecture enhancements:  To improve the algorithm and generalize it over diverse scenarios, we update the observations to incorporate rotation-invariant relative distances of the ego agent to goals and neighbors. 
Once an agent crosses a waypoint, we no longer consider the waypoint in its observation, and the agent moves to the subsequent waypoint.
We introduce dynamics-aware action spaces that are updated based on the dynamics model, angular rate, and longitudinal acceleration, as in the running example in Section \ref{sec:safety-analysis}, ensuring agents respect motion constraints specific to their dynamics.

3) Curriculum training: 
The training framework also incorporates curriculum learning where we progressively make the training environment harder  \cite{narvekar2020curriculum} for improving agents' performance, refining safety rewards, and updating the safety distance $\dsafety$ used in the safety filter. This is detailed in the subsequent sections. %

\subsection{Safety filter design for multiple agents}
For multiple agents, the CBVF $B(s^{(ij)})$ is evaluated for each agent $i$ and any neighboring agent $j$ within the observation range $\dobservation$. A smaller value of $B$ indicates that the near-collision is more imminent and safety is at greater risk. The neighbor agent with the minimum pairwise $B(s^{(ij)})$ is selected as the agent whose actions will be curtailed. We term the module that selects this prioritized constraint as the \textit{prioritization module}. 
If an agent pair $(i,j)$ has each other as the minimum pairwise $B(s^{(ij)})$, then we call them a \textit{potential collision pair}.
\subsection{Safety-informed training}
\label{subsec:safety-informed}

\subsubsection{Curriculum update}
We start the training routine without any safety filter or penalty applied for the first half of the training steps. This is done to optimize the task performance of MARL unconstrained by any safety parameters. Once training reaches half the number of total training steps, we activate the safety filter. Additionally, we introduce the following safety parameters, which are updated using the curriculum learning framework during training. First, the safety distance $\dsafety$ is initialized to zero during the start of model training, allowing agents to approach each other at close ranges. As the training progresses, we increase $\dsafety$ to the desired value. Similarly, we scale the conflict radius $\dengagement$ computed using Eq. \eqref{eq:conflict_distance} based on the value of the $\dsafety$. This setup allows agents to explore the environment early on in the training and prevents them from converging to overly conservative behavior.

\subsubsection{Safety-informed reward}

In addition to the heading, speed, and goal rewards, we introduce some additional penalties. When more than two agents are within the conflict radius $\dengagement$, we apply a potential conflict penalty 
\begin{equation}
\begin{aligned}
    \cost_\textrm{conflict} :=& \sum_{j \in \{ j | \distance(s^{(i j)})<\dengagement\}}\max\{0,r_{\textrm{conflict}} - \textrm{dist}(s^{(ij)}) \}  \\
        & \ \times \max\Big\{0, -\underbrace{\begin{bmatrix}
            x^{(ij)} & y^{(ij)}
        \end{bmatrix} \begin{bmatrix}
            v_x^{(ij)}  \\
            v_y^{(ij)}
        \end{bmatrix}}_{\textrm{relative distance change}}\Big\},
\end{aligned}
\end{equation}
which evaluates whether the agent $j$ is within the potential conflict range and is approaching towards agent $i$. Based on Proposition \ref{prop:conflict-free-region}, we do not apply the penalty when there is just one agent within $\dengagement$.

The penalty $\cost_\textrm{conflict}$ is carefully designed to mitigate the risks associated with potential conflicting constraints when multiple agents enter the range, while simultaneously minimizing the conservatism it may introduce. This penalty is an \textbf{indirect} safety penalty, as it is not incurred based on explicit safety violations but rather indirectly through the proximity of multiple agents.

The final reward structure is 
\begin{multline}
\mathcal{R}_\mathrm{total}(o^{(i)}_\timestep, a^{(i)}_\timestep) = {\mathcal{R}}_\mathrm{tracking}(o^{(i)}_\timestep, a^{(i)}_\timestep) \\+  \rho_{\mathrm{goal}} \mathcal{R}_\mathrm{goal}(o^{(i)}_\timestep, a^{(i)}_\timestep) - \rho_{\mathrm{conflict}} \cost_\mathrm{conflict}
\label{eq:totalreward}
\end{multline}
where $\rho_{\mathrm{goal}}$ is a binary indicator when the agent is at the goal, and $\rho_{\mathrm{conflict}}$ is a binary indicator when the number of other agents within the potential conflict region is more than one.

\begin{table}[t!]
\centering
\caption{Parameter Summary for Different Vehicle Dynamics}
\label{tab:vehicle_dynamics}
\begin{tabular}{|p{4cm}|p{2.4cm}|c|}
\hline
\textbf{Parameter} & \textbf{Air taxi (Sim)} & \textbf{Crazyflie} \\ \hline
\textbf{Groundspeed} & & \\ 
\quad \(v_{\min}\) & 60 knot (30 m/s) & -1.0 m/s\\ 
\quad \(v_{\max}\) & 175 knot (90 m/s) & 1.0 m/s \\ 
\quad \(v_{\text{nominal}}\) & 110 knot (57 m/s) & 0.5 m/s \\ \hline
\textbf{Acceleration} & & \\
\quad \(a_{\min}\) & -3.3 ft/s$^2$ (-1.0 m/s$^2$) & -0.5 m/s$^2$\\ 
\quad \(a_{\max}\) & 6.6 ft/s$^2$ (2.0 m/s$^2$) & 0.5 m/s$^2$\\ \hline
\textbf{Angular Rate ($\omega_{\max}$) (rad/s)} & 0.1 & - \\ \hline
\textbf{Sampling Rate (s)} & 1.0 & 0.1 \\ \hline
\textbf{Waypoint Thresholds ($\pm$)} & & \\ 
\quad Distance to Goal & 0.186 miles (0.3 km) & 0.2 m \\ 
\quad Heading & 45\degree  & 45\degree \\ 
\quad Speed & 38.9 knot (20 m/s) & 0.1 m/s \\ \hline
\textbf{Observation Range ($\dobservation$)} & 3.1 mi. (5.0 km) & 4.0  m\\ \hline
\textbf{Safety Distance ($\dsafety$)} & 500 - 2200 ft (0.152 - 0.671 km)  & 0.5  m\\ 
\textbf{Potential Conflict Range ($\dengagement$)} & 4600 ft (for $\dsafety$=2200  ft) & 1.0 m \\ \hline
\end{tabular}
\end{table}

\begin{figure}[h!]
    \centering
    \includegraphics[width=\columnwidth]{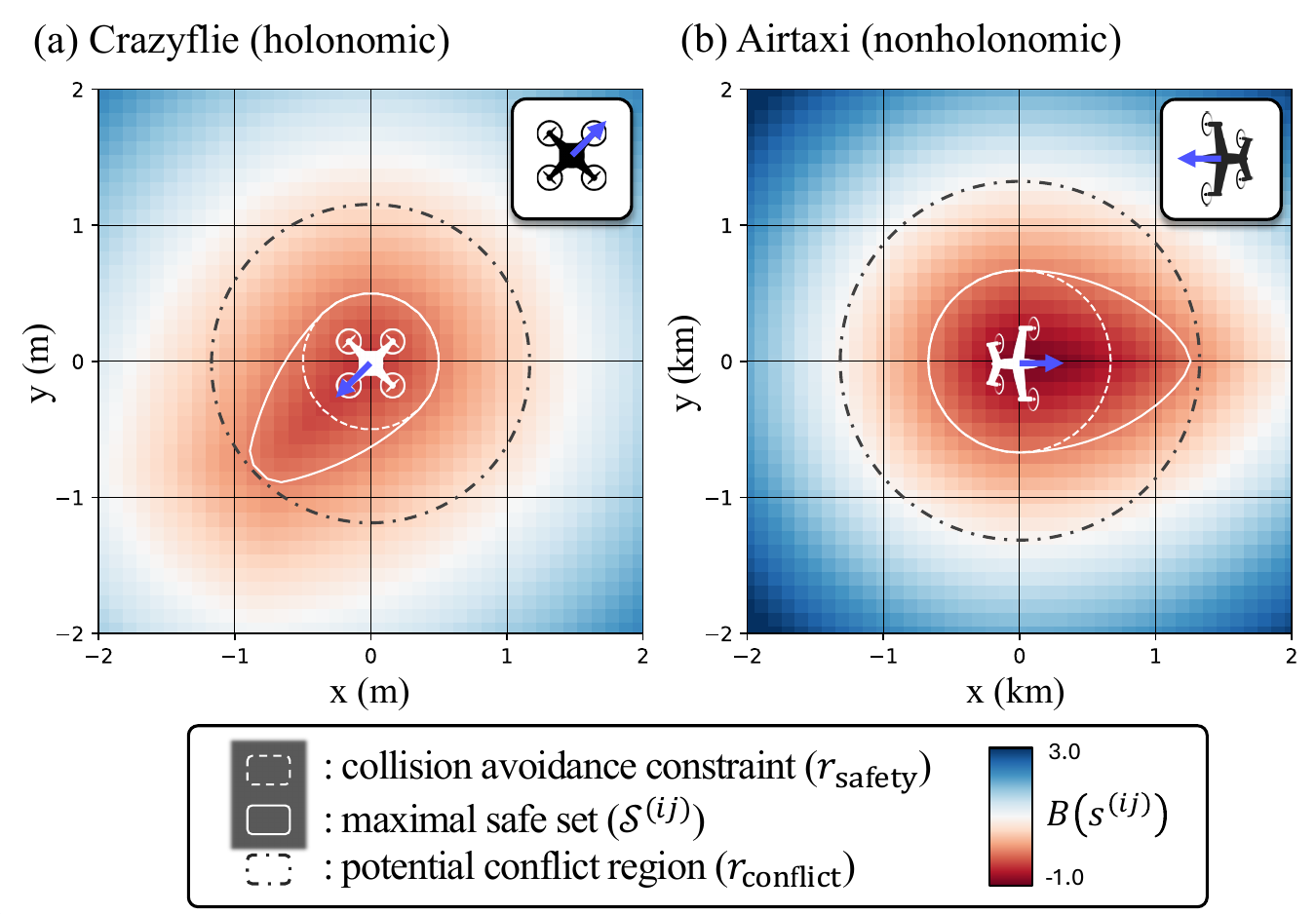}
    \vspace{-1.5em}
    \caption{Maximum safe sets (exterior of the white level sets), potential conflict region, and CBVF (colormap) for each vehicle dynamics, displayed in the relative position space when (a) relative velocity is $(v_x, v_y) = (1, 1)$ [m/s], (b) relative speed and heading is 220 knots and $180$\degree, respectively.}
    \label{fig:conflict-radius}
\end{figure}

\section{Results}
\label{sec:results}

The main robotic application we focus on is the safe autonomous navigation of aerial vehicles. We apply our framework to Crazyflie drones navigating through waypoints in both simulation and hardware experiments, as well as to the simulation of air taxi operations in realistic settings.

\subsection{Experiment Setup}

\textit{Considered dynamics.} We consider two types of dynamics, one for the quadrotors and the other for the air taxi vehicle in a wing-borne flight. The parameters for both dynamics are summarized in Table \ref{tab:vehicle_dynamics} and are set to match the industry standard \cite{joby_aviation, archer_aircraft, wisk_aircraft}. For instance, we use an angular rate bound of $0.1$ rad/s for the air taxi dynamics, as it results in the lateral acceleration $0.5g$ under the nominal speed, which amounts to the maximum tolerable lateral acceleration for passenger comfort in NASA market studies \cite{Shish}.

The quadrotor dynamics in the horizontal plane are represented as simple double integrators with
\begin{equation}
    \dot{x} = v_x,\;\; \dot{y} = v_y,\;\; \dot{v}_x = a_x,\;\; \dot{v}_y = a_y,
\end{equation}
where $\sagent = [x, y, v_x, v_y]$ and $\aagent=[a_x, a_y]$. The quadrotor runs the low-level onboard flight controller to track the commanded actions.

The air taxi dynamics in the horizontal plane are represented using the kinematic vehicle model in \eqref{eq:airtaxi} of the running example in Section \ref{sec:safety-analysis}. Three features of the air taxi dynamics considered in this work make its safety assurance more challenging and interesting. First, the vehicle cannot stop as it has to maintain the wing-borne flight ($v_{\min} > 0$). Next, the dynamics are nonholonomic, meaning that its control towards the lateral direction can be achieved only by changing its direction. Finally, due to small acceleration or deceleration authority, the vehicle often has to employ turning maneuvers for deconfliction. This is common for fixed-wing and hybrid mode vehicles like vertical takeoff-and-landing vehicles (VTOLs), those envisioned for AAM operations \cite{alvarez2019acas, anderson_introduction_flight, FAA_PHAK}. 

Due to these challenges, the advantages of our method for enhancing safety are particularly evident for the air taxi dynamics (Section \ref{subsec:bay-area-sim}). In contrast, for the quadrotors (Section \ref{subsec:crazyflie}), our safety filter design consistently ensures safety across all evaluated methods; thus, we focus more on how our approach achieves performance enhancement. The safe sets, CBVFs, and the potential conflict range computed using HJ reachability are visualized in Figure \ref{fig:conflict-radius}.

\textit{Task \& Training environment.} We modify Multi Particle Environments (MPE) \cite{lowe2017multi} to incorporate agents to follow the dynamics as specified before and the safety filter. \jcnote{In our navigation task setup, the drone must pass through a waypoint with its state satisfying the threshold conditions specified in Table \ref{tab:vehicle_dynamics} to proceed to the next waypoint. The main values that define the training environments are the number of agents $N$, the number of waypoints per agent $M$, and the size of the environment $L$. At every episode, the initial positions of the agents, the waypoints' locations, and the headings are set randomly. The episode is terminated if all agents reach their goal, the last waypoint. After training the MARL policies, we test them in various evaluation scenarios with values of $N, M, L$ different from those of the training environment. For the quadrotor, we use $N$=4, $M$=2, $L$=4, and for the air taxi dynamics, we use $N$=4, $M$=2, $L$=6 for the training.}

\subsection{\jcnote{Comparison Studies}}

\jcnote{We first conduct two sets of simulation experiments to compare our method against: 1) approaches that do not employ a safety filter or curriculum during training, and instead rely on alternative reward designs for safety, and 2) methods from prior works based on model-based CBF design and model-free safe MARL for multi-agent coordination. Both studies are conducted in the quadrotor simulation environment.}

\subsubsection{\jcnote{Ablation study for safety-informed training}} First, we designed our experiments to evaluate the value of (1) introducing the safety filter during training, (2) using the curriculum, and (3) the effectiveness of a potential conflict penalty term for safety, as described in Section \ref{subsec:safety-informed}.
To evaluate the effect of employing the safety filter during training, we compare the results of those trained with and without the filter. To evaluate the effect of the curriculum, we compare our method against a policy trained without the curriculum update in Section \ref{subsec:safety-informed}. Finally, to evaluate the effectiveness of the potential conflict penalty term, we compare it against three alternative penalty terms for safety suggested in the literature:
\begin{itemize}
    \item Hinge loss for constraint violation:
$$\cost_{\textrm{plain}}:=\max\{0, \dsafety - \distance(\srelative)\},$$
This is the most typical penalty term, introduced in the safe RL literature \cite{achiam2017constrained}.
\item CBVF-based hinge loss:
$$\cost_{\textrm{cbvf}}:=\max\{0, - B(\srelative)\},
$$
This penalizes the agent for entering the zero-sublevel set of the CBVF, the unsafe set. The use of reachability value functions as a safety penalty in RL has been explored in previous works such as \cite{asayesh2022least}.
\item Penalty occurring when safety filter intervenes:
$$
\cost_{\textrm{norm.diff}}:=||\aagent_{\textrm{safe}} - \aagent_{\textrm{marl}}||,
$$
based on \eqref{eq:cbvf-qp-single}. This is the main penalty term used in the method of \cite{bejarano2024safety} to inform MARL with safety.
\end{itemize}
When we introduce each penalty term, its weight is carefully tuned to maximize task and safety metrics. In total, we test nine variants of the training methods based on the activation of the safety filter, curriculum, and choice of the reward term, which are detailed in Appendix \ref{appendix:experiments_performed}.

We evaluate each method in three scenarios. In addition to the random scenario same as the training environment, in the second scenario, we test how the MARL policy interacts with a larger number of agents and a more challenging waypoint configuration by setting $N$=6, $M$=3, and $L$=6 while also placing the first two waypoints at the same positions, representing the air corridor. The third scenario reconstructs our hardware experiment environment, which will be detailed in Section \ref{subsec:crazyflie}, in simulation.

Below is the summary of the key aspects of the result, while more details and the table of the comparison for the performance metrics are presented in Appendix \ref{appendix:experiments_performed}:
\begin{itemize}
    \item \textit{Effect of using the safety filter in training:} Methods that incorporate the safety filter during training consistently outperform their counterparts trained without the filter across all metrics.
    \item \textit{Effect of curriculum learning:} The curriculum learning can significantly enhance performance by reducing the conservativeness of the trained policy.
    \item \textit{Effect of potential conflict penalty $\cost_{\textrm{conflict}}$ compared to other penalty candidates:} Our method achieves the best performance in most cases. Importantly, our method outperforms other methods, especially when there is a larger number of agents (the second scenario).
\end{itemize}

\subsubsection{\jcnote{Comparison to other methods}} \jcnote{Next, we compare our method to (1) DG-PPO \cite{zhang2025discrete} and (2) a safety filter designed based on the exponential CBF (ECBF) \cite{nguyen2016exponential}, used for multi-agent collision avoidance in \cite{goarin2024decentralized}. We use $N$=4, $M$=1, $L$=3 for the training of all three methods, which enables fair comparison, especially with results we can obtain from the DG-PPO source code. We run the training of both DG-PPO and our MARL policy with the same number of environment steps (1e7) and gradient steps (epoch\_{ppo}=1), where the initial and goal positions are randomly generated.
}

\jcnote{We evaluate the trained policies in two environments for 25 episodes each: (1) same random environment with world size increased to $L$=6. (2) environment with an increased number of agents $N$=8 and world size $L$=6, where initial and goal positions are arranged in two parallel lines in random order. The results are reported in Tables \ref{table:crazyflie_comparison_small} and \ref{table:crazyflie_comparison_large}. While all methods perform well when the number of agents remains the same as in the training environment, our method is the only method that guarantees 100\% safety when $N$ increases, whereas the percentage of near-collision events increases significantly for DG-PPO and ECBF. It must be noted that DG-PPO is a model-free method that learns a neural CBF during its training, whereas our method uses the CBVF computed based on the system dynamics model. As observed in \cite{zhang2025gcbf}, such model-free methods are vulnerable to generalization in scenarios with a large number of agents.} %

\begin{table}[t]
\caption{Simulation results for Crazyflie dynamics with \textbf{$N$=4}, with time horizon 51.2s. We evaluate goal reach rate (\%) for performance and the percentage of near-collision events ($\distance(\srelative) < \dsafety$) in the timestamped trajectory data (Near collision \%) for safety. }
\label{table:crazyflie_comparison_small}
\vspace{-0.5em}
\centering
\begin{tabular}{|c|cc|}
\hline
\multirow{1}{*}{Methods}
 & Goal reach(\%) & Near collision(\%) \\
\hline
\multirow{1}{*}{DG-PPO} & 96 $\pm$ 11.8 &  0.04 $\pm$ 0.16 \\
\hline
\multirow{1}{*}{Exponential CBF} & 100 $\pm$ 0 & 0.0 $\pm$ 0.0 \\
\hline
\multirow{1}{*}{\textbf{Our Method}} & 100 $\pm$ 0 & 0.0 $\pm$ 0.0 \\
\hline
\end{tabular}
\vspace{-0.8em}
\end{table}

\begin{table}[t]
\caption{Simulation results with \textbf{$N$=8}, and initial \& goal positions arranged in lines under random order. Videos are available in the supplementary material.%
}
\label{table:crazyflie_comparison_large}
\vspace{-0.5em}
\centering
\begin{tabular}{|c|cc|}
\hline
\multirow{1}{*}{Methods}
 & Goal reach(\%) & Near collision(\%) \\
\hline
\multirow{1}{*}{DG-PPO} & 100 $\pm$ 0 &  9.1 $\pm$ 2.7 \\
\hline
\multirow{1}{*}{Exponential CBF} & 93 $\pm$ 8.9 & 8.8 $\pm$ 10.7 \\
\hline
\multirow{1}{*}{\textbf{Our Method}} & \textbf{100 $\pm$ 0} & \textbf{0.0 $\pm$ 0.0} \\
\hline
\end{tabular}\vspace{-1.5em}
\end{table}

\subsection{Hardware experiments with quadrotors}
\label{subsec:crazyflie}

\begin{figure}[t]
    \centering
    \includegraphics[width=0.99\linewidth]{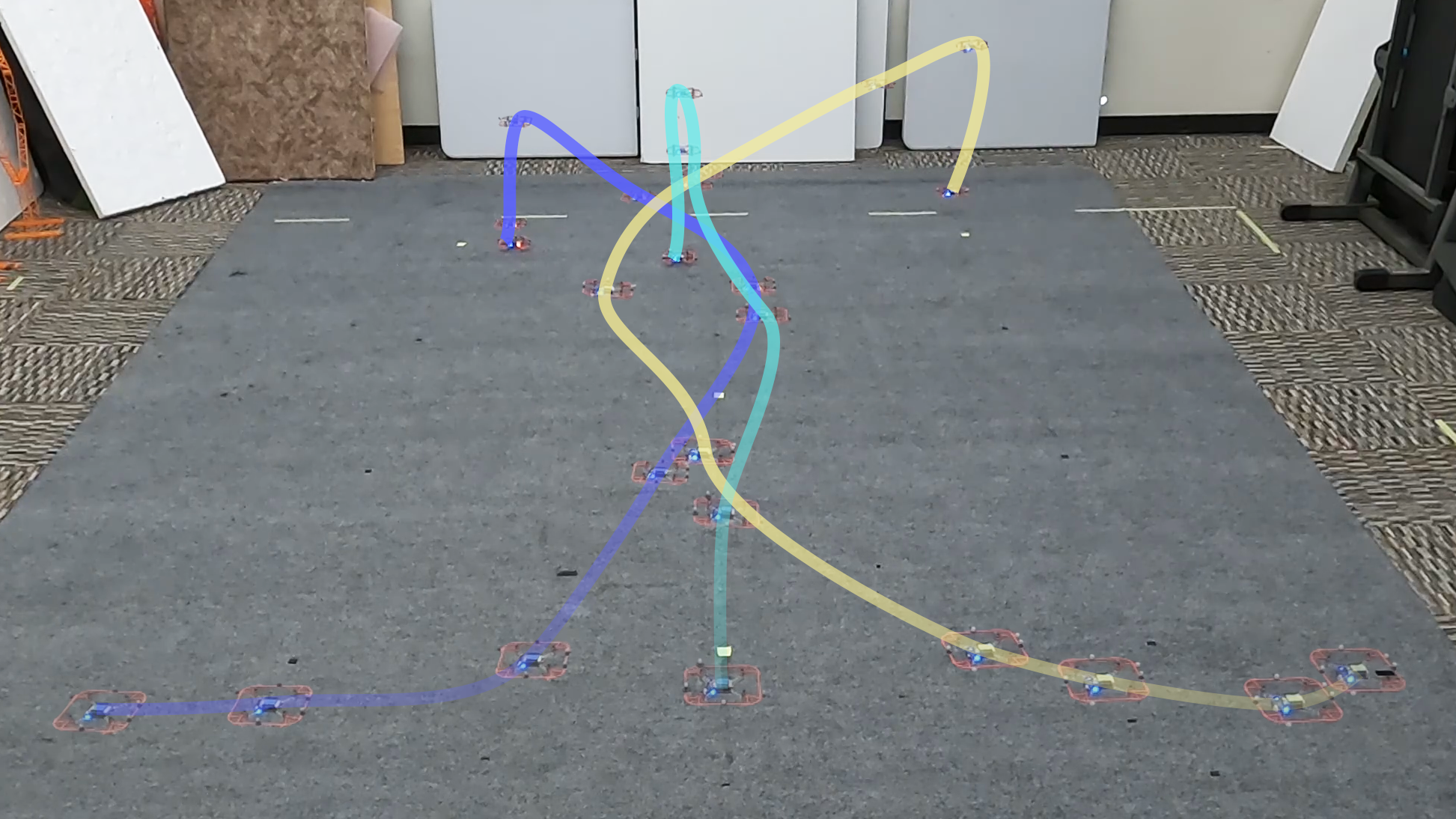}
    \caption{Crazyflie hardware experiment with the MARL policy learned by our method. The three drones have to pass through two common waypoints to get to their landing location. The trajectories corresponding to the video footage are visualized in Fig. \ref{fig:hardware trajectory} (b). }
    \label{fig:hardware picture}
\end{figure}

\begin{figure}[t]
    \centering
    \includegraphics[width=\columnwidth]{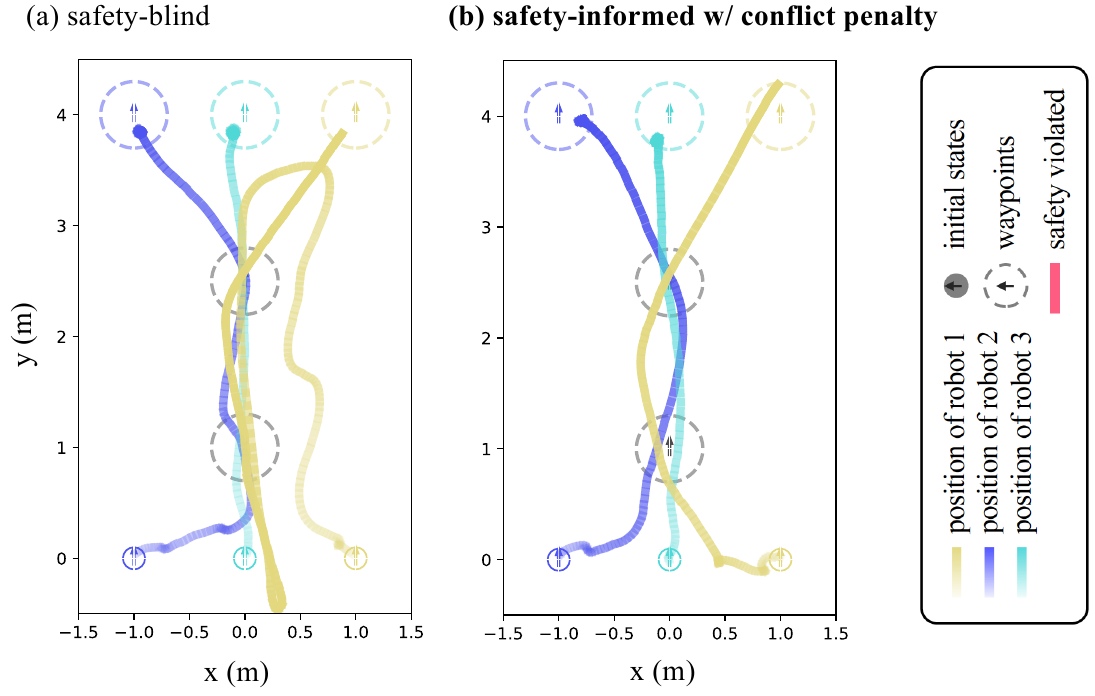}
    \caption{We compare the recorded Crazyflie hardware experiment trajectories under our method and the baseline policy trained without the safety filter. With our approach, the drones smoothly deconflict and efficiently complete the task. In contrast, under the baseline policy, the yellow Crazyflie misses a waypoint and must make a second pass. These results demonstrate that incorporating layered safety information during training improves the performance of the MARL policy.}
    \label{fig:hardware trajectory}
    \vspace{-1em}
\end{figure}

Next, as illustrated in Figure \ref{fig:hardware picture}, we conduct hardware experiments with three Crazyflie 2.0 drones using a Vicon system for localization. Each drone is controlled by a hierarchical framework: its high-level system is modeled with double integrator dynamics, and the resulting high-level state is passed to onboard PID tracking controllers. %
We define high-level feedback control (acceleration in the x and y axes) based on real-time (100 Hz) Vicon system data. To approximate decentralized control, we run distinct decentralized policies for each drone on a single ThinkPad laptop, transmitting high-level control commands every 0.1 seconds.

In the experimental scenario, each drone is required to pass through two shared waypoints—representing an air corridor—before reaching its designated landing location ($N$=3, $M$=3, $L$=3m). We compare the policy learned under our method to the baseline, which is trained without a safety filter in hardware experiments, with the recorded trajectories shown in Figure~\ref{fig:hardware trajectory}. Under our approach, three drones smoothly avoid conflicts and safely navigate their individual waypoints, completing the task in 12.95 seconds. In contrast, the baseline policy requires one drone to perform a second pass, after missing its first waypoint due to the safety filter preventing it from approaching other agents passing the waypoint, thus extending the total completion time to 25.29 seconds. These results demonstrate how our \algoname{} framework enhances task performance through efficient deconfliction. %

\begin{figure}
    \centering
    \includegraphics[width=\columnwidth]{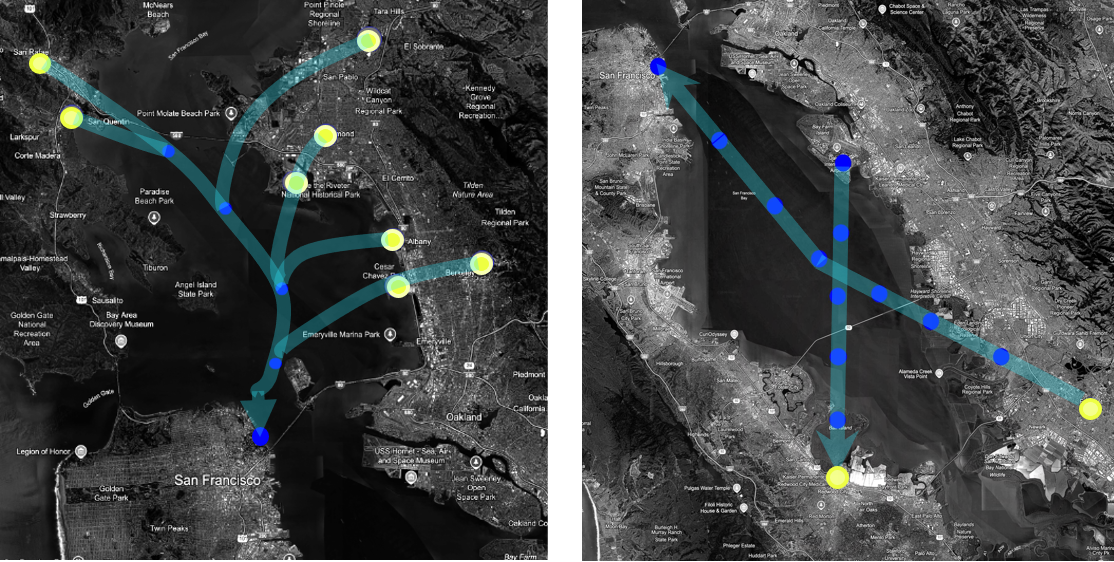}
    \caption{Bay Area case scenarios. The left panel illustrates routes where multiple air taxi vehicles would travel from the North and East Bay toward San Francisco, merging into a single air corridor. The right panel shows intersecting air corridors: one where the vehicles would travel from Fremont (southeast) to San Francisco, and another from Oakland (northeast) to Redwood City. The blue dots represent the waypoints that UAVs follow, while the yellow dots indicate the departure or an incoming waypoint of the corridor.}
    \label{fig:bay-area-map}
    \vspace{-1em}
\end{figure}

\subsection{Simulation of decentralized air taxi operations}
\label{subsec:bay-area-sim}

\begin{figure*}[t!]
    \centering
    \includegraphics[width=\textwidth]{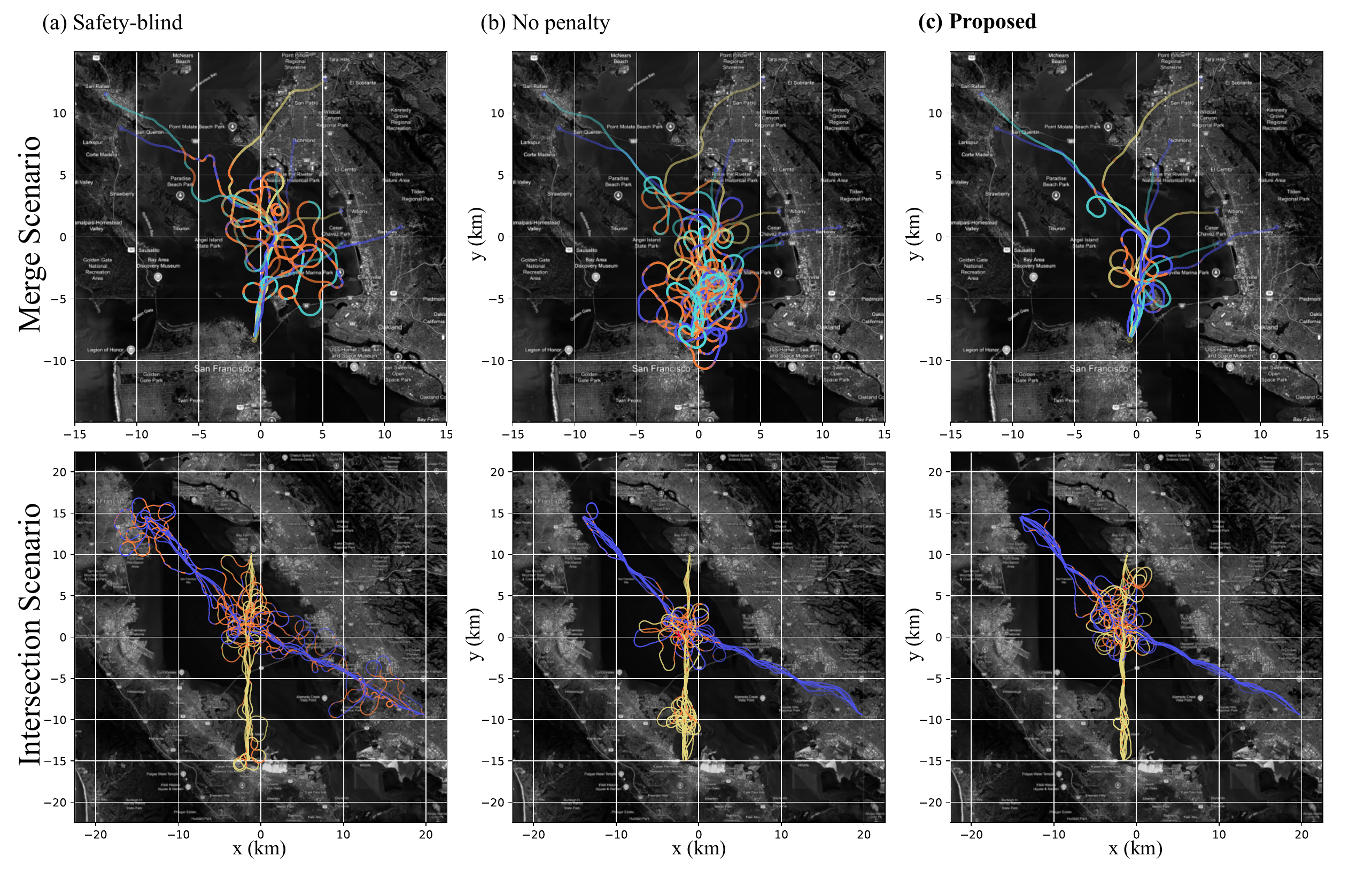}
    \vspace{-1.5em}
    \caption{Comparison of air taxi trajectories in merging and crossing scenarios: The top row illustrates the single-lane merging scenario, where UAVs converge into a shared inbound air corridor, while the bottom row depicts intersecting air corridors. In the merging scenario, our method achieves the most efficient deconfliction of trajectories, minimizing congestion near the corridor. In the crossing scenario, our method demonstrates a wider safety buffer around intersections, as UAVs actively maintain greater separation to mitigate conflicts. Videos are available in the supplementary material.}
    \label{fig:bay-area-main-figure}
\end{figure*}

Finally, we evaluate the application of our framework to decentralized air taxi operations. Although there is no single consensus on how the air traffic management (ATM) system will function for advanced air mobility (AAM) operations, each vehicle will likely be required to have fallback autonomy systems in place, for instance, to ensure safety in case the centralized system fails.

To conduct the study with a realistic traffic volume, we use the results of the Urban Air Mobility (UAM) demand analysis from \cite{Bulusu, Lee}, which estimates how much ground traffic could be replaced by AAM considering various factors including different demographics of riders, socioeconomic factors, and historical commuting patterns. By combining these insights, our study derives a reasonable estimate of how traffic density will evolve once UAM operations reach full implementation. From their results, we consider a peak-density scenario in which each vertiport serves 500 passengers per hour during peak hours, equating to two operations (takeoffs and landings) per minute, with each operation accommodating a 4-passenger aircraft. This corresponds to about 125 trips per hour. We chose vertiports from multiple locations in the Bay Area with high travel demand and designed the air corridors with a lateral separation of 1500 ft based on a preliminary analysis of the separation standards for UAM \cite{Lee}. Waypoints are created to connect the trails of these corridors spaced 3-4 km apart. 
For simplicity and clearer visualization, our simulations focus on aerial vehicles traveling westward (from the East Bay to San Francisco and to the South Bay); we assume outbound trips use a separate fixed altitude, thus our study addresses only horizontal deconfliction. %

An important modeling assumption is the required separation distance between vehicles. We base these restrictions on industry standards that define the minimum safe distance between the aircraft and potential hazards to maintain an acceptable collision risk \cite{Shish}. Although there is no single global standard for the separation distance for AAM vehicles yet, we adopt a set of parameters from a variety of literature: the separation distance from dynamic obstacles, maximum accelerations, and bank angle to ensure safety and passenger comfort, \cite{FAA_NMAC, Shish, Lee}, which guides the minimum horizontal distances between UAVs to range from 500 to 2200 feet. In our study, a horizontal separation of 1500 ft was used, as it aligns with NASA's UAM corridor design and provides a good balance between collision risk and operational efficiency. %

\begin{table}[t]
\caption{Simulation results of air taxi operations emulating potential peak traffic around the Bay Area—a scenario in which all vehicles merge into the city-inbound corridor. For performance, we evaluate the mean travel time (s). For safety, we evaluate the percentage of near-collision events ($\distance(\srelative) < \dsafety$) in the timestamped trajectory data (Near collision \%), and the percentage of instances having multiple agents encountered within the potential conflict range ($\distance(\srelative) < \dengagement$) (Conflict \%).}
\centering
\begin{tabular}{|c|ccc|}
\hline
\multirow{2}{*}{Methods} & \multicolumn{3}{c|}{Merging Scenario ($N$=8, $M$=5)} \\
\cline{2-4}
 & Travel t(s)($\downarrow$) & Near collision(\%)($\downarrow$) & Conflict(\%)($\downarrow$) \\
\hline
\multirow{1}{*}{Safety-blind} & 675.6 & 0.055 & 2.4 \\
\hline
\multirow{1}{*}{No penalty} & 617.9 & 0.042 & 5.5\\
\hline
\multirow{1}{*}{\textbf{Proposed}} & \textbf{450.5} & \textbf{0.021} & \textbf{3.2}\\
\hline
\end{tabular}
\label{table:airtaxi-scenario1}
\end{table}

\begin{table}[!tb]
\caption{Simulation results of air taxi operations---a scenario in which two air corridors intersect with each other.}
\centering
\begin{tabular}{|c|ccc|}
\hline
\multirow{2}{*}{Methods} & \multicolumn{3}{c|}{Intersection Scenario ($N$=16, $M$=6)} \\
\cline{2-4}
 & Travel t(s)($\downarrow$) & Near collision(\%)($\downarrow$) & Conflict(\%)($\downarrow$) \\
\hline
\multirow{1}{*}{Safety blind} & 987.4 & 0.058 & 2.1 \\
\hline
\multirow{1}{*}{No penalty} & 780.5 & 0.129 & 3.8 \\
\hline
\multirow{1}{*}{\textbf{Proposed}} & \textbf{660.8} & \textbf{0.056} & \textbf{1.6} \\
\hline
\end{tabular}
\label{table:airtaxi-scenario2}
\vspace{-1em}
\end{table}

We evaluate three methods: MARL trained without the safety filter and no safety penalty (\textbf{Safety blind}), MARL trained with the safety filter under the proposed curriculum but with no safety penalty (\textbf{No penalty}), and the proposed safety-informed method employing the safety filter, curriculum, and the potential conflict penalty (\textbf{Proposed}) in two high-density air traffic scenarios. The two scenarios, illustrated in Figure \ref{fig:bay-area-map}, represent different commuting patterns in the Bay Area. In the left scenario (\textbf{Merge Scenario}), multiple air routes (eight in total) from the northern Bay merge into a single corridor leading to San Francisco. The departure time of each vehicle varies randomly within a 60-second range. In the scenario shown to the right (\textbf{Intersection Scenario}), two westbound air corridors intersect. Here, we set the UAVs to leave the origin every $90 \pm 15 $ seconds to intentionally induce congestion at the intersection. 

We evaluate 25 random episodes for each method and report the results in Table \ref{table:airtaxi-scenario1} and Table \ref{table:airtaxi-scenario2} for each scenario, respectively. The results show that the proposed method achieves both the highest performance, measured by the shortest mean travel time, and the lowest percentage of near-collision events.

Examples of vehicle trajectories for each scenario and method are visualized in Figure \ref{fig:bay-area-main-figure}. In the merging scenario, our method demonstrates the most efficient deconfliction of trajectories when multiple vehicles merge into the air corridor. In the intersection scenario, near the intersection, the region occupied by vehicles as they maneuver to avoid collisions is noticeably larger in our method compared to the second method, trained with no penalty. This indicates that vehicles using our approach proactively maintain greater separation to mitigate the conflicting constraints.

While we expect that a fully operational ATM system for AAM will be significantly more efficient, seamless, and safer than our simulation study suggests, we present this study as an initial guideline for resolving hypothetical emergency scenarios. For instance, the situations we simulated can emerge when the airspace congestion coincides with the loss of centralized traffic control, requiring each agent to make independent, safe decisions.

\section{Limitations}
\label{sec:limitations}
While our framework shows significant improvements in achieving safety and performance, there are limitations, which we list below.
\begin{enumerate}
    \item \textit{Scalability to higher dimensions}: Our current framework is designed for 2D scenarios, and needs extending it to 3D environments.
    \item \textit{Guarantees for multiple engagements}: Our safety guarantees are currently limited to pairwise interactions. While our method is designed to significantly mitigate collision risks in multi-agent interactions based on theoretical analysis, it does not provide formal guarantees for scenarios involving multi-agent engagements. For further details, see Remark \ref{remark:theory-limitation}.
    \item \textit{Hardware experimentation constraints}: In our hardware experiments, local observations were emulated using a motion capture (mocap) system rather than being obtained through onboard sensing. This simplification may not fully reflect real-world operational constraints and should be addressed in future implementations. Additionally, fixed-wing vehicles were not included in our experiments.
    \item \textit{Communication range constraints}: The impact of communication range limitations in our hardware experiments was not analyzed.
\end{enumerate}

\begin{table*}[!htb]
\caption{Results of policies trained under various methods for Crazyflie dynamics: We evaluate mean travel time (s) and number of reached waypoints (Waypoint \#) for performance, and the percentage of the events involving multiple agents encountered within the potential conflict range in the trajectory data (Conflict \%) for safety risk. Note that in these simulations, the agent never violated safety for all methods due to our safety filter, except in the training scenario when the agent is initialized at the safety-violating states. ($N$=number of agents, $M$=number of waypoints, $L$=world size)}
\centering
\begin{tabular}{|c|ccc|ccc|ccc|}
\hline
\multirow{2}{*}{Methods} & \multicolumn{3}{c|}{Scenario 1 (Training) ($N$=4, $M$=2, $L=4$)} & \multicolumn{3}{c|}{Scenario 2 ($N$=6, $M$=3, $L=6$)} & \multicolumn{3}{c|}{Scenario 3 ($N$=3, $M$=3, $L=3$)} \\
\cline{2-10}
 & Travel time(s)($\downarrow$) & Waypoint\#($\uparrow$) & Conflict(\%)($\downarrow$)
 & Travel t & Waypoint \# & Conflict
 & Travel t & Waypoint \# & Conflict \\
\hline
\multirow{1}{*}{1} (safety-blind) & $18.33$ & $1.62 \pm 0.25$ & 8.7 
  & $29.18$ & $2.11 \pm 0.24$ & 21.9 
  & $19.08$ & $2.14 \pm 0.53$ & 16.2 \\
\hline
\multirow{1}{*}{2} & $18.21$ & $1.67 \pm 0.19$ & 7.1 
  & 29.77 & $2.06 \pm 0.24$ &  19.7
  & $18.92$ & $2.40 \pm 0.47$ & 16.0 \\
\hline
\multirow{1}{*}{3} & $17.73$ & $1.76 \pm 0.16$ & 6.6 
  & $28.59$ & $2.26 \pm 0.21$ & 19.7 
  & $18.44$ & $2.41 \pm 0.46$ & 14.9 \\
\hline
\multirow{1}{*}{4} & $18.73$ & $1.58 \pm 0.23$ & 7.2 
  & $29.11$ & $2.20 \pm 0.21$ & 17.1 
  & $18.79$ & $2.17 \pm 0.47$ & 13.4 \\
\hline
\multirow{1}{*}{5} (no penalty) & $17.56$ & $1.75 \pm 0.17$ & 5.3
  & $28.46$ & $2.33 \pm 0.20$ & 16.9 
  & $16.09$ & $2.78 \pm 0.28$ & 11.8 \\
\hline
\multirow{1}{*}{6} & $18.31$ & $1.67 \pm 0.17$ & 5.9 
  & $28.92$ & $2.27 \pm 0.21$ & 17.0 
  & $17.69$ & $2.42 \pm 0.42$ & 15.0 \\
\hline
\multirow{1}{*}{7} & $17.90$ & $1.71 \pm 0.18$ & 5.3 
  & $28.98$ & $2.32 \pm 0.18$ & 15.8 
  & $17.59$ & $2.54 \pm 0.34$ & 11.2 \\
\hline
\multirow{1}{*}{8} & $20.52$ & $1.26 \pm 0.21$ & 2.7
  & $31.25$ & $1.74 \pm 0.27$ & 6.9 
  & $18.36$ & $2.20 \pm 0.36$ & 8.6 \\
\hline
\multirow{1}{*}{\textbf{9 (proposed)}} & \textbf{17.81} & \textbf{1.78} $\pm$ \textbf{0.18} & \textbf{5.4} 
  & \textbf{28.59} & \textbf{2.42} $\pm$ \textbf{0.20} & \textbf{15.1}
  & \textbf{16.91} & \textbf{2.71} $\pm$ \textbf{0.24} & \textbf{10.8} \\
\hline
\end{tabular}
\label{table:crazyflie-sim}
\end{table*}

\section{Conclusions}
\label{sec:conclusion}
In this work, we presented a layered architecture combining a CBVF-based safety filtering mechanism with a MARL policy, demonstrating its effectiveness in ensuring both safety and efficiency. Our approach enables MARL to navigate conflicts proactively while benefiting from safety-informed reward signals. Along with the safety filter introduced during training using a curriculum learning approach, the \algoname{} framework achieved shorter travel times and reached more waypoints with fewer conflicts. The key components of our approach, curriculum learning, and cost terms that inform potential conflict zones are agnostic to the choice of the MARL algorithm. We validated our method by applying it to two distinct dynamics---quadrotor and fixed-wing AAM flight dynamics---and evaluated it in progressively complex scenarios. We also conducted hardware experiments on three Crazyflie drones, highlighting the applicability of our method in real-world systems. %

\jcnote{Our method integrates model-based safety tools from control theory (CBVFs) with learning-based methods (MARL), together forming a framework that addresses two major challenges in multi-agent problems---safety and efficient coordination. While deep reinforcement learning has faced skepticism in safety-critical applications such as air traffic management, recent advances---including our work---demonstrate the viability of hybrid approaches that combine learning and control, and illustrate how RL can be responsibly applied in safety-critical settings.
}

Future research could investigate decomposition techniques and learning-based reachability analysis (e.g., DeepReach \cite{bansal2021deepreach}) to extend safety verification to higher-dimensional settings. Adapting to other methods, including other MARL algorithms (e.g., MAPPO or even further refining DG-PPO), MPC, or game-theoretic solutions, by treating the potential conflict zone as a soft constraint, is an exciting future work direction. Further investigation is needed to assess how communication constraints affect coordination and safety in decentralized multi-agent systems. Finally, an important future direction is testing the proposed approach's applicability in various robotics domains, ranging from higher-order dynamics to complex environments and sensing constraints, such as ground robots, underwater autonomous vehicles, and space robots.

\section*{Acknowledgments}
We thank Dr. Mir Abbas Jalali (Joby), George Gorospe (NASA), Dr. Anthony Evans (Airbus), Inkyu Jang (SNU) and Kanghyun Ryu (UC Berkeley) for the helpful discussions. 
Jasmine Aloor and Hamsa Balakrishnan were supported in part by NASA  under Grant No. 80NSSC23M0220. Jason J. Choi, Jingqi Li and Claire J. Tomlin were supported
in part by DARPA Assured Autonomy under Grant FA8750-18-C-0101, DARPA ANSR under Grant FA8750-23-C-0080, the NASA ULI
Program in Safe Aviation Autonomy under Grant 62508787-176172, and ONR Basic Research Challenge in Multibody Control Systems under Grant N00014-18-1-2214. 
Jason J. Choi received the support of a fellowship from Kwanjeong Educational Foundation, Korea. 
Jasmine J. Aloor was supported in part by a Mathworks Fellowship. Maria G. Mendoza acknowledges support from NASA under the Clean Sheet Airspace Operating Design project MFRA2018-S-0471. The authors would like to thank the MIT SuperCloud \cite{supercloud} and the Lincoln Laboratory Supercomputing Center for providing high performance computing resources that have contributed to the research results reported within this paper.

\appendix

\subsection{Reward function design for goal reaching}
\label{appendix-goal-reward}

We assume that, from the agent's local observation, it can evaluate its state, including the position and heading relative to the current target waypoint, denoted as $s_{\mathrm{ref}}^{(i)}$. The additional reward ${\mathcal{R}}_\mathrm{tracking}$ is designed to guide agents to efficiently navigate to the target waypoint. It uses a distance-like measure relative to the waypoint position, and can incorporate additional information like the errors from the desired heading angle and speed. The vehicle dynamics also inform the reward design. We design a specific reward for each quadrotor and air taxi dynamics. For the quadrotor, although the vehicle dynamics are holonomic, we want the vehicle to approach the waypoint from a specific target heading direction. To achieve this, we design a reference velocity field, $v_{\mathrm{ref}}(s_{\mathrm{ref}}^{(i)})$, around the waypoint, shaping it like the magnetic field around a solenoid. Then, the reward for the waypoint tracking is given as
$
{\mathcal{R}}_\mathrm{tracking}(\oagent_\timestep, a^{(i)}_\timestep) =  ||v^{(i)} - v_{\mathrm{ref}} ||
$. For the air taxi dynamics, shaping the reference velocity field is not straightforward, as the vehicle is nonholonomic and is mainly limited by its maximum turning radius, determined by its speed and yaw rate. Thus, we compute the time-to-reach function \cite{yang2013one}, which is the minimum time required to reach the target waypoint (satisfying the threshold conditions) subject to the vehicle dynamics. This time-to-reach reward is also used in \cite{lyu2020ttr} for RL-based navigation of mobile robots. The use of the time-to-reach reward guides the vehicle to learn how to perform a 360-degree turn when it misses its waypoint.

\subsection{Air taxi dynamics: additional information}
\label{appendix-airtaxi-dynamics}
The relationship between $\srelative$ and ($\sagent$, $\sagentj$) for the air taxi dynamics in \eqref{eq:airtaxi} is given by
\begin{equation}
\begin{aligned}
x^{(ij)} & = \cos \theta^{(ij)} (x^{(j)} - x^{(i)}) + \sin \theta^{(ij)} (y^{(j)} - y^{(i)}), \\
y^{(ij)} & = -\sin \theta^{(ij)} (x^{(j)} - x^{(i)}) + \cos \theta^{(ij)} (y^{(j)} - y^{(i)}), \\
\theta^{(ij)} & = \theta^{(i)} - \theta^{(j)}.
\end{aligned}
\label{eq:airtaxi-relative}
\end{equation}
The relative dynamics \eqref{eq:relative} can be derived from \eqref{eq:airtaxi} and \eqref{eq:airtaxi-relative}, and are express as
\begin{equation}
\begin{aligned}    
\dot{x}^{(ij)} &= - v^{(i)} + v^{(j)} \cos \theta^{(ij)}  + y^{(ij)} \omega^{(i)} \\
\dot{y}^{(ij)} &= v^{(j)} \sin \theta^{(ij)} - x^{(ij)} \omega^{(i)}\\
\dot{\theta}^{(ij)} &= \omega^{(j)} - \omega^{(i)}.
\end{aligned}
\end{equation}

\subsection{Ablation Study: Details}\label{appendix:experiments_performed}

We conduct comparison studies among the following nine methods:
\begin{enumerate}
    \item Policy trained without the safety filter and no safety penalty \textbf{(Safety blind)}
    \item Policy trained without safety filter and with $\cost_{\textrm{plain}}$
    \item Policy trained without safety filter and with $\cost_{\textrm{conflict}}$
    \item Policy trained with the safety filter and without curriculum learning (with no penalty)
    \item Policy trained with the safety filter and no safety penalty \textbf{(No penalty)}
    \item Policy trained with the safety filter and with $\cost_{\textrm{plain}}$
    \item Policy trained with the safety filter and with $\cost_{\textrm{cbvf}}$
    \item Policy trained with the safety filter and with $\cost_{\textrm{norm.diff}}$
    \item \textbf{Policy trained with the safety filter and with $\cost_{\textrm{conflict}}$ (Proposed)}
\end{enumerate}
Note that methods 5-9 are trained with curriculum learning on $\dsafety$, as described in Section \ref{subsec:safety-informed}. Every policy we compared has been trained for the same number of environment steps. Methods 1, 5, and 9 correspond to the methods we also evaluate in the air taxi operation simulation.

Table \ref{table:crazyflie-sim} summarizes the results of the simulation study, and Figure~\ref{fig:result-circular-config-di} visualizes example trajectories in Scenario 2 under the policies of methods 1, 5, and 9. Each method is evaluated using four random seeds, with 25 episodes per seed, totaling 100 random episodes. Note that in these simulations, the agent never violated safety for all methods due to our safety filter, except in the training scenario when the agent is initialized at the safety-violating states. Thus, the percentage of near-collision events (safety violation) is not reported, and only the rate of potential conflict (for instance, when more than two agents enter the potential conflict range $\dengagement$) is calculated.

The key aspects of the result in Table \ref{table:crazyflie-sim} are:
\begin{itemize}
    \item \textit{Effect of using the safety filter in training (1-3 vs 5-6, 9):} Methods that incorporate the safety filter during training consistently outperform their counterparts trained without the filter across all metrics.
    \item \textit{Effect of curriculum learning (4 vs 5):} The curriculum learning can significantly enhance performance by reducing the conservativeness of the policy.
    \item \textit{Effect of potential conflict penalty $\cost_{\textrm{conflict}}$ compared to other penalty candidates (6, 7, 8 vs 9):} Although method 8 that uses $\cost_{\textrm{norm.diff}}$ consistently shows the lowest rate of potential conflict, and its performance is significantly impaired by the penalty. Our method achieves the best performance in most cases. Importantly, our method outperforms other methods, especially when there is a larger number of agents (Scenario 2).
\end{itemize}

\begin{figure}[!t]
    \centering
    \includegraphics[width=0.7\columnwidth]{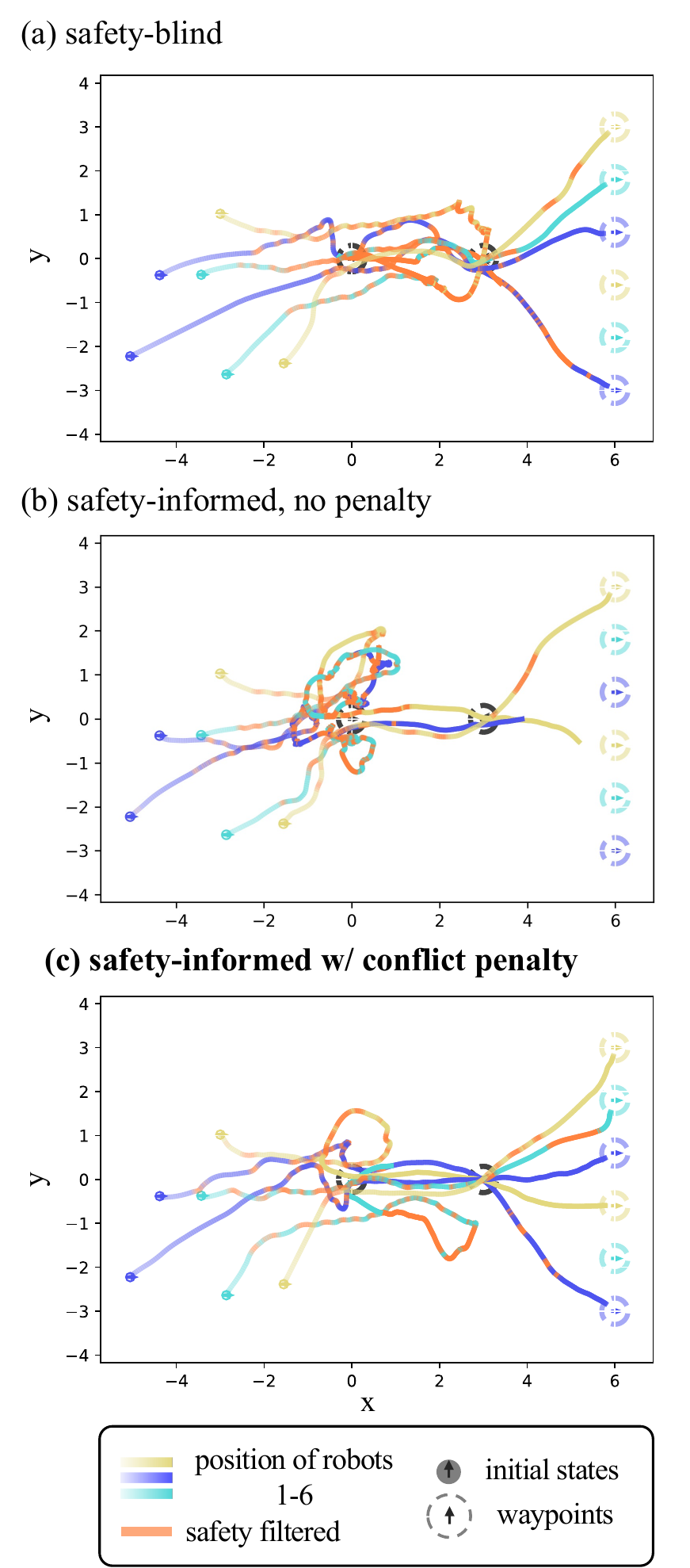}
    \caption{Simulation results of (a) safety-blind (method 1), (b) safety-informed with no penalty (method 5), and (c) safety-informed with potential conflict penalty (method 9) under Scenario 2 in Table \ref{table:crazyflie-sim}, trained for double integrator dynamics. Agents are initialized at random positions and have to merge into a line formed by two waypoints before reaching their final waypoints. While our safety filter ensures safety for all cases, the MARL method trained with a potential conflict penalty shows the most efficient behavior for reaching waypoints. Videos are available in the supplementary material.}
    \label{fig:result-circular-config-di}
\end{figure}

\;

\balance

\bibliographystyle{unsrtnat}
\bibliography{references}

\begin{thebibliography}{92}
\providecommand{\natexlab}[1]{#1}
\providecommand{\url}[1]{\texttt{#1}}
\expandafter\ifx\csname urlstyle\endcsname\relax
  \providecommand{\doi}[1]{doi: #1}\else
  \providecommand{\doi}{doi: \begingroup \urlstyle{rm}\Url}\fi

\bibitem[Poonawala et~al.(2014)Poonawala, Satici, Eckert, and Spong]{poonawala2014collision}
Hasan~A Poonawala, Aykut~C Satici, Hazen Eckert, and Mark~W Spong.
\newblock \href{https://ieeexplore.ieee.org/abstract/document/6979195/?casa_token=AkuolD5bkrEAAAAA:dliu-j2OdSVTYqo9mUPZAEKOmJjvsJSPplB3anOyDuyjrP4ytYgkm_1WfFy9VKnomZ04-t6viDQ}{Collision-free formation control with decentralized connectivity preservation for nonholonomic-wheeled mobile robots}.
\newblock \emph{IEEE Transactions on Control of Network Systems}, 2014.

\bibitem[Liu et~al.(2019)Liu, Ma, Li, and Koenig]{liu2019task}
Minghua Liu, Hang Ma, Jiaoyang Li, and Sven Koenig.
\newblock \href{https://par.nsf.gov/servlets/purl/10107498}{Task and path planning for multi-agent pickup and delivery}.
\newblock In \emph{Proceedings of the International Joint Conference on Autonomous Agents and Multiagent Systems (AAMAS)}, 2019.

\bibitem[Chu et~al.(2012)Chu, Lee, and Sunwoo]{chu2012local}
Keonyup Chu, Minchae Lee, and Myoungho Sunwoo.
\newblock \href{https://ieeexplore.ieee.org/abstract/document/6203588/?casa_token=0UCm9oPisakAAAAA:SJSiVfJm-AZHaSk2LEeQ6Qa6kWEfLRqLYNjgU2_zHVHtDwJDe7PgPVnX6wuU9BgHCKctTVxEEm4}{Local path planning for off-road autonomous driving with avoidance of static obstacles}.
\newblock \emph{IEEE Transactions on Intelligent Transportation Systems}, 2012.

\bibitem[Administration(2023)]{faa}
Federal~Aviation Administration.
\newblock \href{https://www.faa.gov/air-taxis/uam_blueprint}{Urban Air Mobility Concept of Operations 2.0}.
\newblock Technical report, Federal Aviation Administration, 2023.

\bibitem[Wang et~al.(2017)Wang, Ames, and Egerstedt]{wang2017safety}
Li~Wang, Aaron~D. Ames, and Magnus Egerstedt.
\newblock \href{https://ieeexplore.ieee.org/abstract/document/7857061}{Safety Barrier Certificates for Collisions-Free Multirobot Systems}.
\newblock \emph{IEEE Transactions on Robotics}, 2017.

\bibitem[Wang et~al.(2020)Wang, Leung, and Pavone]{wang2020reachability}
Xinrui Wang, Karen Leung, and Marco Pavone.
\newblock \href{https://ieeexplore.ieee.org/abstract/document/9341499}{Infusing Reachability-Based Safety into Planning and Control for Multi-agent Interactions}.
\newblock In \emph{IEEE/RSJ International Conference on Intelligent Robots and Systems (IROS)}, 2020.

\bibitem[Mitchell(2011)]{mitchell2011scalable}
Ian~M Mitchell.
\newblock \href{https://dl.acm.org/doi/pdf/10.1145/1967701.1967718}{Scalable calculation of reach sets and tubes for nonlinear systems with terminal integrators: a mixed implicit explicit formulation}.
\newblock In \emph{Proceedings of the International Conference on Hybrid Systems: Computation and Control}, 2011.

\bibitem[Lee et~al.(2019)Lee, Chen, and Tomlin]{lee2019}
Donggun Lee, Mo~Chen, and Claire~J. Tomlin.
\newblock \href{https://ieeexplore.ieee.org/abstract/document/8793890}{Removing Leaking Corners to Reduce Dimensionality in Hamilton-Jacobi Reachability}.
\newblock In \emph{IEEE International Conference on Robotics and Automation (ICRA)}, 2019.

\bibitem[Jiang et~al.(2024)Jiang, Arfvidsson, He, Chen, and Johansson]{jiang2024guaranteed}
Frank~J Jiang, Kaj~Munhoz Arfvidsson, Chong He, Mo~Chen, and Karl~H Johansson.
\newblock \href{https://arxiv.org/pdf/2404.08334}{Guaranteed Completion of Complex Tasks via Temporal Logic Trees and Hamilton-Jacobi Reachability}.
\newblock \emph{arXiv preprint arXiv:2404.08334}, 2024.

\bibitem[Choi et~al.(2021)Choi, Lee, Sreenath, Tomlin, and Herbert]{choi2021robust}
Jason~J Choi, Donggun Lee, Koushil Sreenath, Claire~J Tomlin, and Sylvia~L Herbert.
\newblock \href{https://ieeexplore.ieee.org/abstract/document/9683085/}{Robust control barrier--value functions for safety-critical control}.
\newblock In \emph{IEEE Conference on Decision and Control (CDC)}, 2021.

\bibitem[Bejarano et~al.(2024)Bejarano, Brunke, and Schoellig]{bejarano2024safety}
Federico~Pizarro Bejarano, Lukas Brunke, and Angela~P Schoellig.
\newblock \href{https://ieeexplore.ieee.org/abstract/document/10778608/}{Safety Filtering While Training: Improving the Performance and Sample Efficiency of Reinforcement Learning Agents}.
\newblock \emph{IEEE Robotics and Automation Letters}, 2024.

\bibitem[Zhang et~al.(2023)Zhang, Garg, and Fan]{zhang2023neural}
Songyuan Zhang, Kunal Garg, and Chuchu Fan.
\newblock \href{https://proceedings.mlr.press/v229/zhang23h/zhang23h.pdf}{Neural graph control barrier functions guided distributed collision-avoidance multi-agent control}.
\newblock In \emph{Conference on Robot Learning}. PMLR, 2023.

\bibitem[Ames et~al.(2016)Ames, Xu, Grizzle, and Tabuada]{ames2016control}
Aaron~D Ames, Xiangru Xu, Jessy~W Grizzle, and Paulo Tabuada.
\newblock \href{https://ieeexplore.ieee.org/document/7782377}{Control barrier function based quadratic programs for safety critical systems}.
\newblock \emph{IEEE Transactions on Automatic Control}, 2016.

\bibitem[Jankovic et~al.(2023)Jankovic, Santillo, and Wang]{jankovic2023multiagent}
Mrdjan Jankovic, Mario Santillo, and Yan Wang.
\newblock \href{https://ieeexplore.ieee.org/stamp/stamp.jsp?arnumber=10294301}{Multiagent systems with CBF-based controllers: Collision avoidance and liveness from instability}.
\newblock \emph{IEEE Transactions on Control Systems Technology}, 2023.

\bibitem[Goarin et~al.(2024)Goarin, Li, Saviolo, and Loianno]{goarin2024decentralized}
Manohari Goarin, Guanrui Li, Alessandro Saviolo, and Giuseppe Loianno.
\newblock \href{https://arxiv.org/pdf/2409.17379}{Decentralized nonlinear model predictive control for safe collision avoidance in quadrotor teams with limited detection range}.
\newblock \emph{arXiv preprint arXiv:2409.17379}, 2024.

\bibitem[Nguyen and Sreenath(2016)]{nguyen2016exponential}
Quan Nguyen and Koushil Sreenath.
\newblock \href{https://ieeexplore.ieee.org/stamp/stamp.jsp?arnumber=7524935}{Exponential control barrier functions for enforcing high relative-degree safety-critical constraints}.
\newblock In \emph{American Control Conference (ACC)}, 2016.

\bibitem[Qin et~al.(2021)Qin, Zhang, Chen, Chen, and Fan]{qin2021learning}
Zengyi Qin, Kaiqing Zhang, Yuxiao Chen, Jingkai Chen, and Chuchu Fan.
\newblock \href{https://openreview.net/forum?id=P6_q1BRxY8Q}{Learning Safe Multi-agent Control with Decentralized Neural Barrier Certificates}.
\newblock In \emph{Proceedings of the International Conference on Learning Representations (ICLR)}, 2021.

\bibitem[Zhang et~al.(2025{\natexlab{a}})Zhang, So, Garg, and Fan]{zhang2025gcbf}
Songyuan Zhang, Oswin So, Kunal Garg, and Chuchu Fan.
\newblock \href{https://ieeexplore.ieee.org/abstract/document/10842511}{Gcbf+: A neural graph control barrier function framework for distributed safe multi-agent control}.
\newblock \emph{IEEE Transactions on Robotics}, 2025{\natexlab{a}}.

\bibitem[Zhang et~al.(2025{\natexlab{b}})Zhang, So, Black, and Fan]{zhang2025discrete}
Songyuan Zhang, Oswin So, Mitchell Black, and Chuchu Fan.
\newblock \href{https://openreview.net/forum?id=1X1R7P6yzt}{Discrete GCBF Proximal Policy Optimization for Multi-agent Safe Optimal Control}.
\newblock In \emph{Proceedings of the International Conference on Learning Representations (ICLR)}, 2025{\natexlab{b}}.

\bibitem[Chen et~al.(2016)Chen, Shih, and Tomlin]{ClaireHJ16}
Mo~Chen, Jennifer~C. Shih, and Claire~J. Tomlin.
\newblock \href{https://ieeexplore.ieee.org/document/7798509}{Multi-vehicle collision avoidance via hamilton-jacobi reachability and mixed integer programming}.
\newblock In \emph{IEEE Conference on Decision and Control (CDC)}, 2016.

\bibitem[Bellman(1957)]{bellman1957dynamic}
Richard Bellman.
\newblock \emph{\href{https://books.google.com/books/about/Dynamic_Programming.html?id=rZW4ugAACAAJ}{Dynamic Programming}}.
\newblock Princeton University Press, 1957.

\bibitem[Bansal and Tomlin(2021)]{bansal2021deepreach}
Somil Bansal and Claire~J Tomlin.
\newblock \href{https://ieeexplore.ieee.org/abstract/document/9561949}{Deepreach: A deep learning approach to high-dimensional reachability}.
\newblock In \emph{IEEE International Conference on Robotics and Automation (ICRA)}, 2021.

\bibitem[Hsu et~al.(2021)Hsu, Rubies-Royo, Tomlin, and Fisac]{hsu2021safety}
Kai-Chieh Hsu, Vicenç Rubies-Royo, Claire Tomlin, and Jaime~F Fisac.
\newblock \href{https://arxiv.org/pdf/2112.12288}{Safety and Liveness Guarantees through Reach-Avoid Reinforcement Learning}.
\newblock In \emph{Proceedings of Robotics: Science and Systems}, 2021.

\bibitem[Hsu et~al.(2023{\natexlab{a}})Hsu, Nguyen, and Fisac]{hsu2023isaacs}
Kai-Chieh Hsu, Duy~Phuong Nguyen, and Jaime~Fernandez Fisac.
\newblock \href{https://proceedings.mlr.press/v211/hsu23a/hsu23a.pdf}{Isaacs: Iterative soft adversarial actor-critic for safety}.
\newblock In \emph{Learning for Dynamics and Control Conference}. PMLR, 2023{\natexlab{a}}.

\bibitem[Zhu et~al.(2024)Zhu, Lan, Zhao, and Zhang]{zhu2024safe}
Kai Zhu, Fengbo Lan, Wenbo Zhao, and Tao Zhang.
\newblock \href{https://link.springer.com/article/10.1007/s10846-024-02156-6}{Safe Multi-Agent Reinforcement Learning via Approximate Hamilton-Jacobi Reachability}.
\newblock \emph{Journal of Intelligent \& Robotic Systems}, 2024.

\bibitem[Li et~al.(2025)Li, Lee, Lee, Dong, Sojoudi, and Tomlin]{li2025certifiable}
Jingqi Li, Donggun Lee, Jaewon Lee, Kris~Shengjun Dong, Somayeh Sojoudi, and Claire Tomlin.
\newblock \href{https://ieeexplore.ieee.org/document/10855593}{Certifiable Reachability Learning Using a New Lipschitz Continuous Value Function}.
\newblock \emph{IEEE Robotics and Automation Letters}, 2025.

\bibitem[Yang et~al.(2024)Yang, Hu, Wei, Li, and Liu]{yang2024scalable}
Yujie Yang, Hanjiang Hu, Tianhao Wei, Shengbo~Eben Li, and Changliu Liu.
\newblock \href{https://arxiv.org/pdf/2407.20532}{Scalable synthesis of formally verified neural value function for hamilton-jacobi reachability analysis}.
\newblock \emph{arXiv preprint arXiv:2407.20532}, 2024.

\bibitem[Lin and Bansal(2024)]{lin2024verification}
Albert Lin and Somil Bansal.
\newblock \href{https://proceedings.mlr.press/v242/lin24a.html}{Verification of neural reachable tubes via scenario optimization and conformal prediction}.
\newblock In \emph{Learning for Dynamics and Control Conference}. PMLR, 2024.

\bibitem[Taye et~al.(2022)Taye, Bertram, Fan, and Wei]{taye2022reachability}
Abenezer~G Taye, Josh Bertram, Chuchu Fan, and Peng Wei.
\newblock \href{https://arc.aiaa.org/doi/10.2514/6.2022-3542}{Reachability based online safety verification for high-density urban air mobility trajectory planning}.
\newblock In \emph{AIAA Aviation Forum}, 2022.

\bibitem[Bertram and Wei(2020)]{bertram2020distributed}
Josh Bertram and Peng Wei.
\newblock \href{https://bpb-us-w2.wpmucdn.com/web.seas.gwu.edu/dist/9/15/files/2019/12/scitech20-josh-multi.pdf}{Distributed computational guidance for high-density urban air mobility with cooperative and non-cooperative collision avoidance}.
\newblock In \emph{AIAA Scitech Forum}, 2020.

\bibitem[Altman(2021)]{altman2021constrained}
Eitan Altman.
\newblock \emph{\href{https://www.taylorfrancis.com/books/mono/10.1201/9781315140223/constrained-markov-decision-processes-eitan-altman}{Constrained Markov decision processes}}.
\newblock Routledge, 2021.

\bibitem[Paternain et~al.(2019)Paternain, Chamon, Calvo-Fullana, and Ribeiro]{paternain2019constrained}
Santiago Paternain, Luiz Chamon, Miguel Calvo-Fullana, and Alejandro Ribeiro.
\newblock \href{https://proceedings.neurips.cc/paper/2019/hash/c1aeb6517a1c7f33514f7ff69047e74e-Abstract.html}{Constrained reinforcement learning has zero duality gap}.
\newblock \emph{Advances in Neural Information Processing Systems}, 2019.

\bibitem[Ray et~al.(2019)Ray, Achiam, and Amodei]{ray2019benchmarking}
Alex Ray, Joshua Achiam, and Dario Amodei.
\newblock \href{https://cdn.openai.com/safexp-short.pdf}{Benchmarking safe exploration in deep reinforcement learning}.
\newblock \emph{arXiv preprint arXiv:1910.01708}, 2019.

\bibitem[Gu et~al.(2021)Gu, Kuba, Wen, Chen, Wang, Tian, Wang, Knoll, and Yang]{gu2021multi}
Shangding Gu, Jakub~Grudzien Kuba, Munning Wen, Ruiqing Chen, Ziyan Wang, Zheng Tian, Jun Wang, Alois Knoll, and Yaodong Yang.
\newblock \href{https://arxiv.org/pdf/2110.02793}{Multi-agent constrained policy optimisation}.
\newblock \emph{arXiv preprint arXiv:2110.02793}, 2021.

\bibitem[Hsu et~al.(2023{\natexlab{b}})Hsu, Hu, and Fisac]{hsu2023safety}
Kai-Chieh Hsu, Haimin Hu, and Jaime~F Fisac.
\newblock \href{https://www.annualreviews.org/content/journals/10.1146/annurev-control-071723-102940}{The safety filter: A unified view of safety-critical control in autonomous systems}.
\newblock \emph{Annual Review of Control, Robotics, and Autonomous Systems}, 2023{\natexlab{b}}.

\bibitem[Alshiekh et~al.(2018)Alshiekh, Bloem, Ehlers, K{\"o}nighofer, Niekum, and Topcu]{alshiekh2018safe}
Mohammed Alshiekh, Roderick Bloem, R{\"u}diger Ehlers, Bettina K{\"o}nighofer, Scott Niekum, and Ufuk Topcu.
\newblock \href{https://ojs.aaai.org/index.php/AAAI/article/view/11797}{Safe reinforcement learning via shielding}.
\newblock In \emph{Proceedings of the AAAI conference on artificial intelligence}, 2018.

\bibitem[ElSayed-Aly et~al.(2021)ElSayed-Aly, Bharadwaj, Amato, Ehlers, Topcu, and Feng]{elsayed2021safe}
Ingy ElSayed-Aly, Suda Bharadwaj, Christopher Amato, R{\"u}diger Ehlers, Ufuk Topcu, and Lu~Feng.
\newblock \href{https://arxiv.org/abs/2101.11196}{Safe Multi-Agent Reinforcement Learning via Shielding}.
\newblock In \emph{Proceedings of the International Joint Conference on Autonomous Agents and Multiagent Systems (AAMAS)}, 2021.

\bibitem[Schulman et~al.(2017)Schulman, Wolski, Dhariwal, Radford, and Klimov]{schulman2017proximal}
John Schulman, Filip Wolski, Prafulla Dhariwal, Alec Radford, and Oleg Klimov.
\newblock \href{https://arxiv.org/pdf/1707.06347}{Proximal policy optimization algorithms}.
\newblock 2017.

\bibitem[Lillicrap et~al.(2015)Lillicrap, Hunt, Pritzel, Heess, Erez, Tassa, Silver, and Wierstra]{lillicrap2015continuous}
Timothy~P Lillicrap, Jonathan~J Hunt, Alexander Pritzel, Nicolas Heess, Tom Erez, Yuval Tassa, David Silver, and Daan Wierstra.
\newblock \href{https://arxiv.org/pdf/1509.02971}{Continuous control with deep reinforcement learning}.
\newblock \emph{arXiv preprint arXiv:1509.02971}, 2015.

\bibitem[Yu et~al.(2022)Yu, Velu, Vinitsky, Gao, Wang, Bayen, and Wu]{yu2022surprising}
Chao Yu, Akash Velu, Eugene Vinitsky, Jiaxuan Gao, Yu~Wang, Alexandre Bayen, and Yi~Wu.
\newblock \href{https://proceedings.neurips.cc/paper_files/paper/2022/file/9c1535a02f0ce079433344e14d910597-Paper-Datasets_and_Benchmarks.pdf}{The surprising effectiveness of ppo in cooperative multi-agent games}.
\newblock \emph{Advances in Neural Information Processing Systems}, 2022.

\bibitem[Lowe et~al.(2017)Lowe, Wu, Tamar, Harb, Pieter~Abbeel, and Mordatch]{lowe2017multi}
Ryan Lowe, Yi~I Wu, Aviv Tamar, Jean Harb, OpenAI Pieter~Abbeel, and Igor Mordatch.
\newblock \href{https://proceedings.neurips.cc/paper/2017/file/68a9750337a418a86fe06c1991a1d64c-Paper.pdf}{Multi-agent actor-critic for mixed cooperative-competitive environments}.
\newblock \emph{Advances in Neural Information Processing Systems}, 2017.

\bibitem[Nayak et~al.(2023)Nayak, Choi, Ding, Dolan, Gopalakrishnan, and Balakrishnan]{informarl_icml}
Siddharth Nayak, Kenneth Choi, Wenqi Ding, Sydney Dolan, Karthik Gopalakrishnan, and Hamsa Balakrishnan.
\newblock \href{https://proceedings.mlr.press/v202/nayak23a.html}{Scalable Multi-Agent Reinforcement Learning through Intelligent Information Aggregation}.
\newblock In Andreas Krause, Emma Brunskill, Kyunghyun Cho, Barbara Engelhardt, Sivan Sabato, and Jonathan Scarlett, editors, \emph{International Conference on Machine Learning}, PMLR, 2023.

\bibitem[Slegers et~al.(2006)Slegers, Kyle, and Costello]{slegers2006nonlinear}
Nathan Slegers, Jason Kyle, and Mark Costello.
\newblock \href{https://arc.aiaa.org/doi/pdf/10.2514/1.21531}{Nonlinear model predictive control technique for unmanned air vehicles}.
\newblock \emph{Journal of Guidance, Control, and Dynamics}, 2006.

\bibitem[El-Ferik et~al.(2015)El-Ferik, Siddiqui, and Lewis]{el2015distributed}
Sami El-Ferik, Bilal~A Siddiqui, and Frank~L Lewis.
\newblock \href{https://ieeexplore.ieee.org/abstract/document/7134749/}{Distributed nonlinear MPC of multi-agent systems with data compression and random delays}.
\newblock \emph{IEEE Transactions on Automatic Control}, 2015.

\bibitem[Eren et~al.(2017)Eren, Prach, Ko{\c{c}}er, Rakovi{\'c}, Kayacan, and A{\c{c}}{\i}kme{\c{s}}e]{eren2017model}
Utku Eren, Anna Prach, Ba{\c{s}}aran~Bahad{\i}r Ko{\c{c}}er, Sa{\v{s}}a~V Rakovi{\'c}, Erdal Kayacan, and Beh{\c{c}}et A{\c{c}}{\i}kme{\c{s}}e.
\newblock \href{https://arc.aiaa.org/doi/abs/10.2514/1.G002507}{Model predictive control in aerospace systems: Current state and opportunities}.
\newblock \emph{Journal of Guidance, Control, and Dynamics}, 2017.

\bibitem[Zhou et~al.(2017)Zhou, Wang, Bandyopadhyay, and Schwager]{zhou2017fast}
Dingjiang Zhou, Zijian Wang, Saptarshi Bandyopadhyay, and Mac Schwager.
\newblock \href{https://ieeexplore.ieee.org/stamp/stamp.jsp?arnumber=7828016}{Fast, on-line collision avoidance for dynamic vehicles using buffered voronoi cells}.
\newblock \emph{IEEE Robotics and Automation Letters}, 2017.

\bibitem[Dawood et~al.(2023)Dawood, Pan, Dengler, Zhou, Schoellig, and Bennewitz]{dawood2024safemultiagentreinforcementlearning}
Murad Dawood, Sicong Pan, Nils Dengler, Siqi Zhou, Angela~P Schoellig, and Maren Bennewitz.
\newblock \href{https://arxiv.org/abs/2312.12861}{Safe Multi-Agent Reinforcement Learning for Behavior-Based Cooperative Navigation}.
\newblock \emph{arXiv preprint arXiv:2312.12861}, 2023.

\bibitem[Ba{\c{s}}ar and Olsder(1998)]{bacsar1998dynamic}
Tamer Ba{\c{s}}ar and Geert~Jan Olsder.
\newblock \emph{\href{https://epubs.siam.org/doi/pdf/10.1137/1.9781611971132.bm}{Dynamic noncooperative game theory}}.
\newblock SIAM, 1998.

\bibitem[Evans et~al.(2016)Evans, Vaze, and Barnhart]{evans2016airline}
Antony Evans, Vikrant Vaze, and Cynthia Barnhart.
\newblock \href{https://pubsonline.informs.org/doi/abs/10.1287/trsc.2014.0543}{Airline-driven performance-based air traffic management: Game theoretic models and multicriteria evaluation}.
\newblock \emph{Transportation Science}, 2016.

\bibitem[Mylvaganam et~al.(2017)Mylvaganam, Sassano, and Astolfi]{mylvaganam2017differential}
Thulasi Mylvaganam, Mario Sassano, and Alessandro Astolfi.
\newblock \href{https://ieeexplore.ieee.org/abstract/document/7909033/}{A differential game approach to multi-agent collision avoidance}.
\newblock \emph{IEEE Transactions on Automatic Control}, 2017.

\bibitem[Fridovich-Keil et~al.(2020)Fridovich-Keil, Ratner, Peters, Dragan, and Tomlin]{fridovich2020efficient}
David Fridovich-Keil, Ellis Ratner, Lasse Peters, Anca~D Dragan, and Claire~J Tomlin.
\newblock \href{https://ieeexplore.ieee.org/abstract/document/9197129/}{Efficient iterative linear-quadratic approximations for nonlinear multi-player general-sum differential games}.
\newblock In \emph{IEEE International Conference on Robotics and Automation (ICRA)}, 2020.

\bibitem[Bhatt et~al.(2023)Bhatt, Jia, and Mehr]{bhatt2023efficient}
Maulik Bhatt, Yixuan Jia, and Negar Mehr.
\newblock \href{https://ieeexplore.ieee.org/stamp/stamp.jsp?arnumber=10342328}{Efficient constrained multi-agent trajectory optimization using dynamic potential games}.
\newblock In \emph{IEEE/RSJ International Conference on Intelligent Robots and Systems (IROS)}, 2023.

\bibitem[Hu et~al.(2024)Hu, Dragotto, Zhang, Liang, Stellato, and Fisac]{FisacWhoPlays}
Haimin Hu, Gabriele Dragotto, Zixu Zhang, Kaiqu Liang, Bartolomeo Stellato, and Jaime~F. Fisac.
\newblock \href{https://arxiv.org/abs/2402.09246}{Who Plays First? Optimizing the Order of Play in Stackelberg Games with Many Robots}.
\newblock \emph{arXiv preprint arXiv:2402.09246}, 2024.

\bibitem[Laine et~al.(2023)Laine, Fridovich-Keil, Chiu, and Tomlin]{laine2023computation}
Forrest Laine, David Fridovich-Keil, Chih-Yuan Chiu, and Claire Tomlin.
\newblock \href{https://epubs.siam.org/doi/10.1137/21M142530X}{The computation of approximate generalized feedback nash equilibria}.
\newblock \emph{SIAM Journal on Optimization}, 2023.

\bibitem[Li et~al.(2024)Li, Sojoudi, Tomlin, and Fridovich-Keil]{li2024computation}
Jingqi Li, Somayeh Sojoudi, Claire~J Tomlin, and David Fridovich-Keil.
\newblock \href{https://epubs.siam.org/doi/pdf/10.1137/24M1634709}{The Computation of Approximate Feedback Stackelberg Equilibria in Multiplayer Nonlinear Constrained Dynamic Games}.
\newblock \emph{SIAM Journal on Optimization}, 2024.

\bibitem[Kopardekar et~al.(2016)Kopardekar, Rios, Prevot, Johnson, Jung, and Robinson]{Kopardekar2016UnmannedAS}
Parimal Kopardekar, Joseph~L. Rios, Thomas Prevot, Marcus Johnson, Jaewoo Jung, and John~E. Robinson.
\newblock \href{https://api.semanticscholar.org/CorpusID:201144595}{Unmanned Aircraft System Traffic Management (UTM) Concept of Operations}.
\newblock 2016.

\bibitem[Tomlin et~al.(1998)Tomlin, Pappas, and Sastry]{tomlin1998}
C.~Tomlin, G.J. Pappas, and S.~Sastry.
\newblock \href{https://ieeexplore.ieee.org/document/664154}{Conflict resolution for air traffic management: a study in multi-agent hybrid systems}.
\newblock \emph{IEEE Transactions on Automatic Control}, 1998.

\bibitem[Chen et~al.(2017)Chen, Hu, Fisac, Akametalu, Mackin, and Tomlin]{chen2017reachability}
Mo~Chen, Qie Hu, Jaime~F Fisac, Kene Akametalu, Casey Mackin, and Claire~J Tomlin.
\newblock \href{https://arc.aiaa.org/doi/10.2514/1.G000774}{Reachability-based safety and goal satisfaction of unmanned aerial platoons on air highways}.
\newblock \emph{Journal of Guidance, Control, and Dynamics}, 2017.

\bibitem[Cruck and Lygeros(2007)]{subliminal}
Eva Cruck and John Lygeros.
\newblock \href{https://ieeexplore.ieee.org/stamp/stamp.jsp?arnumber=4282641}{Subliminal air traffic control: Human friendly control of a multi-agent system}.
\newblock In \emph{American Control Conference}, 2007.

\bibitem[Xue et~al.(2022)Xue, Ishihara, and Lee]{Min2022negotiation}
Min Xue, Abraham~K. Ishihara, and Paul~U. Lee.
\newblock \href{https://ieeexplore.ieee.org/document/9925751}{Negotiation Model for Cooperative Operations in Upper Class E Airspace}.
\newblock In \emph{IEEE/AIAA Digital Avionics Systems Conference (DASC)}, 2022.

\bibitem[Kuchar et~al.(2004)Kuchar, Andrews, Drumm, Hall, Heinz, Thompson, and Welch]{kuchar2004safety}
James Kuchar, John Andrews, Ann Drumm, Tim Hall, Val Heinz, Steven Thompson, and Jerry Welch.
\newblock \href{https://arc.aiaa.org/doi/10.2514/6.2004-6423}{A safety analysis process for the traffic alert and collision avoidance system (TCAS) and see-and-avoid systems on remotely piloted vehicles}.
\newblock In \emph{AIAA 3rd ``Unmanned Unlimited" Technical Conference, Workshop and Exhibit}, 2004.

\bibitem[Erzberger and Heere(2010)]{erzberger2010algorithm}
Heinz Erzberger and Karen Heere.
\newblock \href{https://journals.sagepub.com/doi/pdf/10.1243/09544100JAERO546}{Algorithm and operational concept for resolving short-range conflicts}.
\newblock \emph{Proceedings of the Institution of Mechanical Engineers, Part G: Journal of Aerospace Engineering}, 2010.

\bibitem[Holland et~al.(2013)Holland, Kochenderfer, and Olson]{holland2013optimizing}
Jessica~E Holland, Mykel~J Kochenderfer, and Wesley~A Olson.
\newblock \href{https://arc.aiaa.org/doi/10.2514/atcq.21.3.275}{Optimizing the next generation collision avoidance system for safe, suitable, and acceptable operational performance}.
\newblock \emph{Air Traffic Control Quarterly}, 2013.

\bibitem[Tomlin et~al.(1996)Tomlin, Pappas, Lygeros, Godbole, Sastry, and Meyer]{TOMLIN1996}
Claire Tomlin, G~Pappas, J~Lygeros, D~Godbole, S~Sastry, and G~Meyer.
\newblock \href{https://www.sciencedirect.com/science/article/pii/S1474667017585596}{Hybrid Control in Air Traffic Management Systems}.
\newblock \emph{IFAC Proceedings Volumes}, 1996.

\bibitem[Tang et~al.(2008)Tang, Denery, Erzberger, and Paielli]{tang2008tactical}
Huabin Tang, Dallas Denery, Heinz Erzberger, and Russell Paielli.
\newblock \href{https://arc.aiaa.org/doi/10.2514/6.2008-6973}{Tactical Separation Algorithms and Their Interaction with Conflict Avoidance Systems}.
\newblock In \emph{AIAA Guidance, Navigation and Control Conference and Exhibit}, 2008.

\bibitem[Erzberger et~al.(2012)Erzberger, Lauderdale, and Chu]{erzberger2012automated}
Heinz Erzberger, Todd~A Lauderdale, and Yung-Cheng Chu.
\newblock \href{https://journals.sagepub.com/doi/pdf/10.1177/0954410011417347}{Automated conflict resolution, arrival management, and weather avoidance for air traffic management}.
\newblock \emph{Proceedings of the Institution of Mechanical Engineers, Part G: Journal of aerospace engineering}, 2012.

\bibitem[Chen et~al.(2024)Chen, Evans, Brittain, and Wei]{Chen&PengWei}
Shulu Chen, Antony~D. Evans, Marc Brittain, and Peng Wei.
\newblock \href{https://arxiv.org/abs/2305.10556}{Integrated Conflict Management for UAM With Strategic Demand Capacity Balancing and Learning-Based Tactical Deconfliction}.
\newblock \emph{IEEE Transactions on Intelligent Transportation Systems}, 2024.

\bibitem[Wabersich et~al.(2023)Wabersich, Taylor, Choi, Sreenath, Tomlin, Ames, and Zeilinger]{wabersich2023}
Kim~P. Wabersich, Andrew~J. Taylor, Jason~J. Choi, Koushil Sreenath, Claire~J. Tomlin, Aaron~D. Ames, and Melanie~N. Zeilinger.
\newblock \href{https://ieeexplore.ieee.org/abstract/document/10266799}{Data-Driven Safety Filters: Hamilton-Jacobi Reachability, Control Barrier Functions, and Predictive Methods for Uncertain Systems}.
\newblock \emph{IEEE Control Systems Magazine}, 2023.

\bibitem[Lab(2025)]{hj_reachability}
Stanford Autonomous~Systems Lab.
\newblock \href{https://github.com/StanfordASL/hj_reachability}{hj\_reachability: Hamilton-Jacobi reachability analysis in JAX}, 2025.
\newblock Accessed: 2025-01-28.

\bibitem[Bansal et~al.(2017)Bansal, Chen, Herbert, and Tomlin]{bansal2017hj}
Somil Bansal, Mo~Chen, Sylvia Herbert, and Claire~J. Tomlin.
\newblock \href{https://ieeexplore.ieee.org/abstract/document/8263977}{Hamilton-Jacobi reachability: A brief overview and recent advances}.
\newblock In \emph{IEEE Conference on Decision and Control (CDC)}, 2017.

\bibitem[Fisac et~al.(2019{\natexlab{a}})Fisac, Lugovoy, Rubies-Royo, Ghosh, and Tomlin]{fisac2019bridging}
Jaime~F Fisac, Neil~F Lugovoy, Vicen{\c{c}} Rubies-Royo, Shromona Ghosh, and Claire~J Tomlin.
\newblock \href{https://ieeexplore.ieee.org/document/8794107}{Bridging hamilton-jacobi safety analysis and reinforcement learning}.
\newblock In \emph{IEEE International Conference on Robotics and Automation (ICRA)}, 2019{\natexlab{a}}.

\bibitem[Chilakamarri et~al.(2024)Chilakamarri, Feng, and Bansal]{chilakamarri2024reachability}
Vamsi~Krishna Chilakamarri, Zeyuan Feng, and Somil Bansal.
\newblock \href{https://arxiv.org/pdf/2410.07796}{Reachability analysis for black-box dynamical systems}.
\newblock \emph{arXiv preprint arXiv:2410.07796}, 2024.

\bibitem[Choi et~al.(2023)Choi, Lee, Li, How, Sreenath, Herbert, and Tomlin]{choi2023forward}
Jason~J Choi, Donggun Lee, Boyang Li, Jonathan~P How, Koushil Sreenath, Sylvia~L Herbert, and Claire~J Tomlin.
\newblock \href{https://arxiv.org/abs/2310.17180}{A forward reachability perspective on robust control invariance and discount factors in reachability analysis}.
\newblock \emph{arXiv preprint arXiv:2310.17180}, 2023.

\bibitem[Mitchell et~al.(2005)Mitchell, Bayen, and Tomlin]{mitchell2005}
I.M. Mitchell, A.M. Bayen, and C.J. Tomlin.
\newblock \href{https://ieeexplore.ieee.org/abstract/document/1463302}{A time-dependent Hamilton-Jacobi formulation of reachable sets for continuous dynamic games}.
\newblock \emph{IEEE Transactions on Automatic Control}, 2005.

\bibitem[Fisac et~al.(2019{\natexlab{b}})Fisac, Akametalu, Zeilinger, Kaynama, Gillula, and Tomlin]{fisac2019tac}
Jaime~F. Fisac, Anayo~K. Akametalu, Melanie~N. Zeilinger, Shahab Kaynama, Jeremy Gillula, and Claire~J. Tomlin.
\newblock \href{https://ieeexplore.ieee.org/abstract/document/8493361}{A General Safety Framework for Learning-Based Control in Uncertain Robotic Systems}.
\newblock \emph{IEEE Transactions on Automatic Control}, 2019{\natexlab{b}}.

\bibitem[Ghosh and Tomlin(2000)]{ghosh2000maneuver}
Ronojoy Ghosh and Claire Tomlin.
\newblock \href{https://ieeexplore.ieee.org/stamp/stamp.jsp?arnumber=878986}{Maneuver design for multiple aircraft conflict resolution}.
\newblock In \emph{American Control Conference}, 2000.

\bibitem[Narvekar et~al.(2020)Narvekar, Peng, Leonetti, Sinapov, Taylor, and Stone]{narvekar2020curriculum}
Sanmit Narvekar, Bei Peng, Matteo Leonetti, Jivko Sinapov, Matthew~E Taylor, and Peter Stone.
\newblock \href{https://dl.acm.org/doi/abs/10.5555/3455716.3455897}{Curriculum learning for reinforcement learning domains: A framework and survey}.
\newblock \emph{Journal of Machine Learning Research}, 2020.

\bibitem[{Joby Aviation}(2025)]{joby_aviation}
{Joby Aviation}.
\newblock Joby aviation - official website, 2025.
\newblock URL \url{https://www.jobyaviation.com/}.
\newblock Accessed: 2025-01-31.

\bibitem[{Archer Aviation}(2025)]{archer_aircraft}
{Archer Aviation}.
\newblock Archer aviation - our aircraft, 2025.
\newblock URL \url{https://archer.com/aircraft}.
\newblock Accessed: 2025-01-31.

\bibitem[{Wisk Aero}(2025)]{wisk_aircraft}
{Wisk Aero}.
\newblock Wisk aero - our aircraft, 2025.
\newblock URL \url{https://wisk.aero/aircraft/}.
\newblock Accessed: 2025-01-31.

\bibitem[Shish et~al.(2021)Shish, Cramer, Gorospe, Lombaerts, Stepanyan, and Kannan]{Shish}
Kimberlee~H. Shish, Nick~B. Cramer, George Gorospe, Thomas Lombaerts, Vahram Stepanyan, and Keerthana Kannan.
\newblock \emph{\href{https://arc.aiaa.org/doi/10.2514/6.2021-1114}{Survey of Capabilities and Gaps in External Perception Sensors for Autonomous Urban Air Mobility Applications}}.
\newblock 2021.

\bibitem[Alvarez et~al.(2019)Alvarez, Jessen, Owen, Silbermann, and Wood]{alvarez2019acas}
Luis~E Alvarez, Ian Jessen, Michael~P Owen, Joshua Silbermann, and Paul Wood.
\newblock \href{https://ieeexplore.ieee.org/document/9081631}{ACAS sXu: Robust decentralized detect and avoid for small unmanned aircraft systems}.
\newblock In \emph{IEEE/AIAA Digital Avionics Systems Conference (DASC)}, 2019.

\bibitem[Anderson(2016)]{anderson_introduction_flight}
John~D. Anderson.
\newblock \emph{\href{https://www.mheducation.com/highered/product/Introduction-to-Flight-Anderson.html}{Introduction to Flight}}.
\newblock McGraw-Hill Education, New York, NY, 8th edition, 2016.

\bibitem[{Federal Aviation Administration}(2023)]{FAA_PHAK}
{Federal Aviation Administration}.
\newblock \emph{\href{https://www.faa.gov/regulations_policies/handbooks_manuals/aviation/phak}{Pilot’s Handbook of Aeronautical Knowledge}}.
\newblock U.S. Department of Transportation, Federal Aviation Administration, 2023.

\bibitem[Achiam et~al.(2017)Achiam, Held, Tamar, and Abbeel]{achiam2017constrained}
Joshua Achiam, David Held, Aviv Tamar, and Pieter Abbeel.
\newblock \href{https://proceedings.mlr.press/v70/achiam17a/achiam17a.pdf}{Constrained policy optimization}.
\newblock In \emph{International Conference on Machine Learning}. PMLR, 2017.

\bibitem[Asayesh et~al.(2022)Asayesh, Chen, Mehrandezh, and Gupta]{asayesh2022least}
Salar Asayesh, Mo~Chen, Mehran Mehrandezh, and Kamal Gupta.
\newblock \href{https://arxiv.org/pdf/2106.00936}{Least-restrictive multi-agent collision avoidance via deep meta reinforcement learning and optimal control}.
\newblock In \emph{International Conference on Robot Intelligence Technology and Applications}. Springer, 2022.

\bibitem[Bulusu et~al.(2021)Bulusu, Onat, Sengupta, Yedavalli, and Macfarlane]{Bulusu}
Vishwanath Bulusu, Emin~Burak Onat, Raja Sengupta, Pavan Yedavalli, and Jane Macfarlane.
\newblock \href{https://ieeexplore.ieee.org/document/9339977}{A Traffic Demand Analysis Method for Urban Air Mobility}.
\newblock \emph{IEEE Transactions on Intelligent Transportation Systems}, 2021.

\bibitem[Lee et~al.(2022)Lee, Abramson, Phillips, and Tang]{Lee}
Seungman Lee, Michael Abramson, James~D. Phillips, and Huabin Tang.
\newblock \href{https://ieeexplore.ieee.org/document/9925746}{Preliminary Analysis of Separation Standards for Urban Air Mobility using Unmitigated Fast-Time Simulation}.
\newblock In \emph{IEEE/AIAA Digital Avionics Systems Conference (DASC)}, 2022.

\bibitem[{Federal Aviation Administration}()]{FAA_NMAC}
{Federal Aviation Administration}.
\newblock {Aeronautical Information Manual (AIM), Chapter 7: Safety of Flight, Section 7: Near Midair Collision Reporting}.
\newblock \url{https://www.faa.gov/air_traffic/publications/atpubs/aim_html/chap7_section_7.html}.
\newblock Accessed: 26 January 2025.

\bibitem[Reuther et~al.(2018)Reuther, Kepner, Byun, Samsi, Arcand, Bestor, Bergeron, Gadepally, Houle, Hubbell, Jones, Klein, Milechin, Mullen, Prout, Rosa, Yee, and Michaleas]{supercloud}
Albert Reuther, Jeremy Kepner, Chansup Byun, Siddharth Samsi, William Arcand, David Bestor, Bill Bergeron, Vijay Gadepally, Michael Houle, Matthew Hubbell, Michael Jones, Anna Klein, Lauren Milechin, Julia Mullen, Andrew Prout, Antonio Rosa, Charles Yee, and Peter Michaleas.
\newblock \href{https://ieeexplore.ieee.org/stamp/stamp.jsp?arnumber=8547629}{Interactive supercomputing on 40,000 cores for machine learning and data analysis}.
\newblock In \emph{IEEE High Performance extreme Computing Conference (HPEC)}, 2018.

\bibitem[Yang et~al.(2013)Yang, Becker-Weimann, Bissell, and Tomlin]{yang2013one}
Insoon Yang, Sabine Becker-Weimann, Mina~J Bissell, and Claire~J Tomlin.
\newblock \href{https://dl.acm.org/doi/pdf/10.1145/2461328.2461359}{One-shot computation of reachable sets for differential games}.
\newblock In \emph{Proceedings of the International Conference on Hybrid Systems: Computation and Control}, 2013.

\bibitem[Lyu and Chen(2020)]{lyu2020ttr}
Xubo Lyu and Mo~Chen.
\newblock \href{https://ieeexplore.ieee.org/stamp/stamp.jsp?arnumber=9341477}{Ttr-based reward for reinforcement learning with implicit model priors}.
\newblock In \emph{IEEE/RSJ International Conference on Intelligent Robots and Systems (IROS)}, 2020.

\end{thebibliography}

\end{document}